\documentclass[lettersize,journal]{IEEEtran}
\usepackage{amsmath,amsfonts}
\usepackage{amssymb,amsthm}
\usepackage{algorithmic}
\usepackage{algorithm}
\usepackage{array}
\usepackage[caption=false,font=normalsize,labelfont=sf,textfont=sf]{subfig}
\usepackage{textcomp}
\usepackage{stfloats}
\usepackage{url}
\usepackage{graphicx}
\usepackage{cite}
\usepackage{booktabs}
\usepackage{multirow}
\usepackage{xcolor}
\usepackage{colortbl}
\usepackage{hyperref}
\newtheorem{theorem}{Theorem}
\newtheorem{proposition}{Proposition}

\title{Rectified Schr\"odinger Bridge Matching for\\Few-Step Visual Navigation}

\author{Wuyang~Luan,~Junhui~Li,~Weiguang~Zhao,~Wenjian~Zhang,~Tieru~Wu$^{\dagger}$,~and~Rui~Ma$^{\dagger}$%
\thanks{$^{\dagger}$Corresponding authors (e-mail: wutr@jlu.edu.cn; ruim@jlu.edu.cn).}%
\thanks{W.~Luan, T.~Wu, and R.~Ma are with the School of Artificial Intelligence, Jilin University, Changchun 130012, China (e-mail: luanwy25@mails.jlu.edu.cn).}%
\thanks{J.~Li is with the College of Computer Science, Chongqing University, Chongqing 400044, China (e-mail: junhuili@stu.cqu.edu.cn).}%
\thanks{W.~Zhao is with the Department of Computer Science, University of Liverpool, Liverpool L69 3BX, UK (e-mail: weiguang.zhao@liverpool.ac.uk).}%
\thanks{W.~Zhang is with Changchun GenY Technology Co., Ltd., Changchun, China (e-mail: zhangwenjian@genycc.cn).}%
}

\begin{document}
\maketitle

\begin{abstract}
Visual navigation from monocular image streams requires an agent to transform high-dimensional visual observations into continuous, long-horizon action trajectories. Generative policies based on diffusion models and Schr\"odinger Bridges (SB) can model multimodal action distributions faithfully, yet they require dozens of integration steps due to high-variance stochastic transport. We propose Rectified Schr\"odinger Bridge Matching (RSBM), a visual-conditioned trajectory generation framework that exploits a shared velocity-field structure between standard Schr\"odinger Bridges ($\varepsilon\!=\!1$) and deterministic Optimal Transport ($\varepsilon\!\to\!0$), governed by a single entropic regularization parameter $\varepsilon$. We prove two theoretical results: (1)~the conditional velocity field maintains the same functional form across the entire $\varepsilon$ spectrum, so one network parameterization serves all regularization strengths; and (2)~reducing $\varepsilon$ linearly decreases the conditional velocity variance, which improves the stability of coarse-step ODE integration. Anchored to a learned conditional prior that shortens transport distance, RSBM achieves over 94\% cosine similarity and 92\% success rate in merely 3 integration steps, without distillation or multi-stage training, substantially narrowing the gap between high-fidelity generative policies and real-time multimedia systems.
\end{abstract}

\begin{IEEEkeywords}
Visual navigation, monocular image streams, visual trajectory generation, Schr\"odinger bridge, flow matching, diffusion models, few-step inference, optimal transport.
\end{IEEEkeywords}

\section{Introduction}

\IEEEPARstart{V}{isual} navigation is a fundamental perception-and-decision problem in intelligent multimedia systems. An agent must interpret streaming visual observations, preserve scene semantics over time, and convert them into reliable action trajectories. Recent generative policies powered by Denoising Diffusion Probabilistic Models (DDPMs) have demonstrated clear advantages over deterministic regression in this setting, because they can faithfully represent the multimodal nature of future actions under visual ambiguity. When multiple valid continuations exist for the same observation, deterministic regressors average across modes and produce infeasible trajectories, whereas diffusion-based policies maintain distinct modes and generate physically plausible plans.

Despite these representational advantages, deploying diffusion-based policies in real-time multimedia systems presents a critical practical challenge in inference latency. Standard diffusion models and Schr\"odinger Bridges (SB) construct high-variance Brownian transport paths from noise to data, requiring dozens of iterative denoising steps for accurate trajectory recovery. For vision-driven navigation where trajectory prediction must operate on top of continuous visual perception under tight response budgets, such computational overhead is prohibitive. Existing acceleration strategies, including step-skipping schedules, knowledge distillation, and progressive training, can reduce the number of function evaluations but typically sacrifice trajectory fidelity or require costly multi-stage procedures.

A separate line of work addresses the initialization problem. Rather than starting from uninformative isotropic Gaussian noise, prior-conditioned methods such as NaviBridger anchor the generative process to a learned motion prior, which shortens the effective transport distance and improves convergence. However, even with a good prior, the underlying Brownian Bridge still exhibits high path curvature and variance in the interior of the transport interval, limiting the achievable quality at very low step counts.

To address both challenges simultaneously, we propose Rectified Schr\"odinger Bridge Matching (RSBM). Our key insight is that standard SB with $\varepsilon = 1$ and the linear interpolants underlying Conditional Flow Matching with $\varepsilon \to 0$ are \emph{endpoints of the same entropic regularization spectrum}. By introducing $\varepsilon \in (0,1]$ into the bridge kernel, RSBM provides a continuous interpolation between maximum-entropy stochastic transport and deterministic optimal transport. We prove that the conditional velocity field maintains the same functional form across the entire $\varepsilon$ family (Theorem~\ref{thm:cancel}), and that reducing $\varepsilon$ linearly reduces velocity variance (Proposition~\ref{prop:var}). Combined with a learned conditional prior that shortens transport distance, these properties enable RSBM to generate high-fidelity trajectories in as few as 3 ODE steps while preserving multimodal behavior.

In summary, our main contributions are:
\begin{itemize}
    \item \textbf{A Continuous SB--FM Interpolation:} We establish that $\varepsilon$-parameterized bridge kernels form a continuum from maximum-entropy Schr\"odinger Bridges to deterministic optimal transport, and prove that the conditional velocity field is structurally invariant across this spectrum (Theorem~\ref{thm:cancel}), enabling a shared network parameterization.
    \item \textbf{Variance Reduction with Theoretical Guarantee:} We prove that $\varepsilon$-rectification linearly reduces conditional velocity variance (Proposition~\ref{prop:var}), with full derivations from Schr\"odinger Bridge theory in Section~IV.
    \item \textbf{Single-Stage Few-Step Performance:} RSBM achieves $6.3\times$ lower MSE than NaviBridger at $k\!=\!3$ and matches its $k\!=\!10$ accuracy with $3.8\times$ fewer function evaluations, without distillation or multi-stage training as in~\cite{song2023consistency, liu2023rectified}.
\end{itemize}

\section{Related Work}

\textbf{Generative Policies for Visual Navigation.} Visual navigation has evolved from classical modular pipelines~\cite{chalvatzaras2022survey, yang2016survey} and end-to-end approaches based on reinforcement learning~\cite{zeng2020survey, kulhanek2021visual} or behavioral cloning~\cite{chen2019behavioral, manderson2020vision} toward generative policies. Foundation models such as ViNT~\cite{shah2023vint} demonstrate strong sample efficiency, while 3D representations like Gaussian Splatting~\cite{guo2025igl, lei2025gaussnav} improve spatial grounding. However, deterministic planners struggle in multimodal environments where averaging across distinct valid modes yields infeasible plans~\cite{florence2022implicit, shafiullah2022behavior}. Score-based diffusion and flow matching models~\cite{carvalho2023motion, ke20243d, zhu2023diffusion, ajay2022conditional, janner2022planning} address this by capturing multimodal action distributions; NoMaD~\cite{sridhar2024nomad} was among the first to apply diffusion to navigation. A key limitation remains that initialization from isotropic Gaussian noise~\cite{ho2020denoising, sohl2015deep} necessitates long reverse processes, limiting real-time deployment.

\begin{figure*}[!t]
  \centering
  \includegraphics[width=\textwidth]{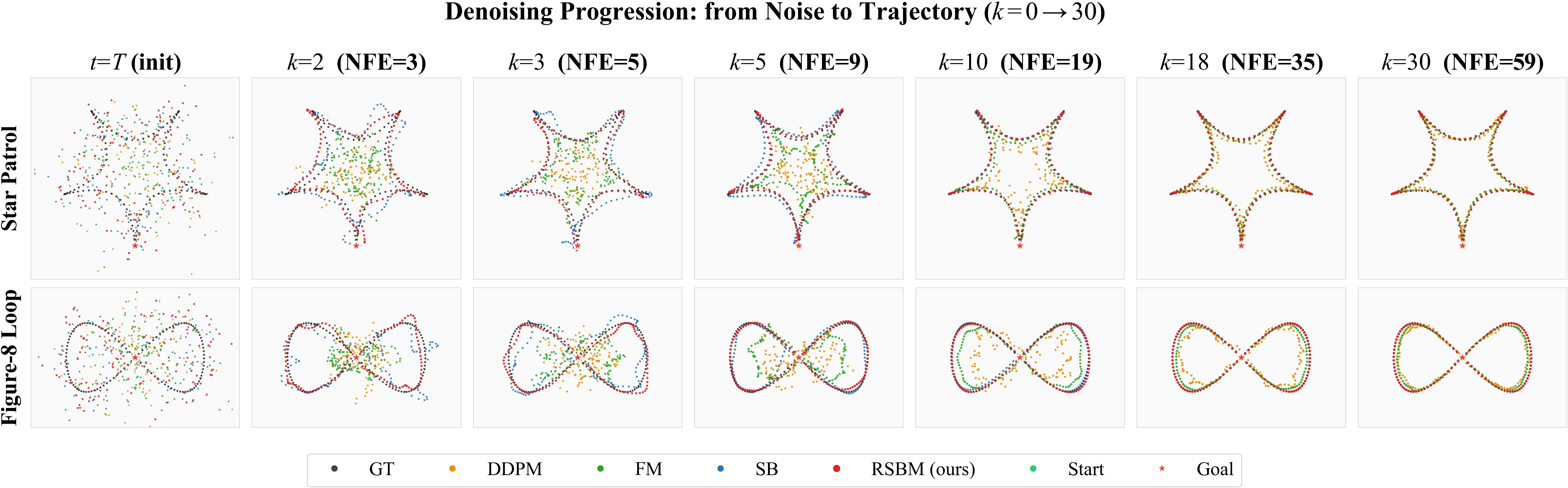}
  \caption{Denoising progression on two toy trajectories (star patrol and figure-8 loop). At $t\!=\!T$ all methods start as unstructured point clouds (Gaussian noise for DDPM/FM; learned prior for NaviBridger/RSBM). By $k\!=\!2$ (NFE$=$3), only RSBM closely matches the GT, while baselines require $k\!\geq\!10$ to converge. At $k\!=\!30$ (NFE$=$59), all methods have converged, confirming the quantitative trends in Table~\ref{tab:vpred}.}
  \label{fig:progression}
\end{figure*}

\textbf{Diffusion Bridges and Flow Matching.} Diffusion bridges~\cite{somnath2023aligned} generalize endpoint-conditioned processes~\cite{heng2025simulating} via Doob's $h$-transform~\cite{liu2022let}, with extensions to discrete-time~\cite{li2023bbdm} and continuous-time formulations~\cite{zhou2023denoising}. Recent Schr\"odinger Bridge Matching methods~\cite{shi2023diffusion, liu2024generalized} learn bridges without iterative simulation; Light and Optimal SBM~\cite{gushchin2024light} and Adversarial SBM~\cite{gushchin2024adversarial} further improve training efficiency. In navigation, NaviBridger~\cite{ren2025prior} initializes a diffusion bridge from a learned motion prior, NaviD~\cite{zhang2024navid} leverages depth constraints, FlowNav~\cite{gode2025flownav} combines CFM with depth priors, and StepNav~\cite{luo2026stepnav} constructs geometry-aware priors online. However, standard bridge formulations exhibit ill-conditioned dynamics near temporal boundaries, introducing truncation errors in few-step regimes~\cite{tong2023simulation, zhu2024switched}. Our method addresses this by explicitly constraining bridge variance via a single $\varepsilon$ parameter, yielding more stable transport dynamics.

\textbf{Path Straightening and Accelerated Sampling.} Rectified Flow~\cite{liu2023rectified} straightens ODE trajectories via iterative reflow; Consistency Models~\cite{song2023consistency} distill pre-trained diffusion models into few-step generators. Both require multi-stage training. RSBM achieves path straightening in a single stage by explicit variance control from an informed prior, retaining multimodal coverage at intermediate $\varepsilon$.

\section{Method}

\begin{figure*}[!t]
\centering
\includegraphics[width=\textwidth]{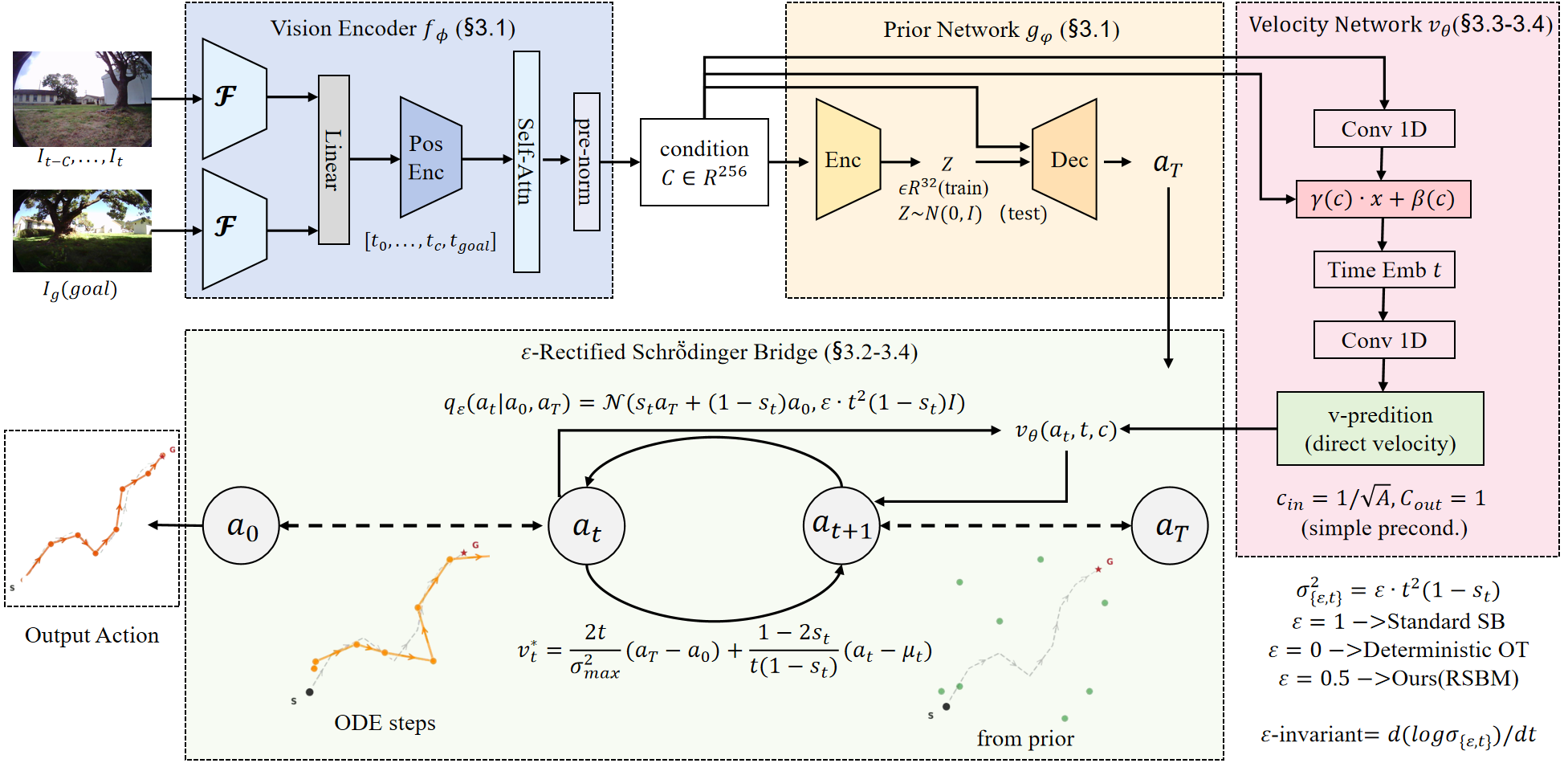}
\caption{\textbf{Overview of the RSBM framework.} \emph{Left}: A dual-stream EfficientNet-B0 vision encoder $f_\phi$ (\S\ref{sec:problem}) extracts observation and goal features, which are fused via positional encoding and self-attention into a context vector $\mathbf{c} \in \mathbb{R}^{256}$. \emph{Center}: A learned variational prior network $g_\psi$ (\S\ref{sec:problem}) produces a coarse action prior $\mathbf{a}_T$. \emph{Right}: A conditional U-Net 1D velocity network $\mathbf{v}_\theta$ (\S\ref{sec:velocity}) with FiLM conditioning iteratively refines $\mathbf{a}_T$ into the output trajectory $\mathbf{a}_0$ via an $\varepsilon$-Rectified Schr\"odinger Bridge (\S\ref{sec:bridge}--\ref{sec:inference}). The three trajectory snapshots illustrate the progressive refinement: from scattered prior waypoints (green), through partially converged intermediate states (orange), to the final high-fidelity trajectory (red) closely matching the ground truth (dashed gray).}
\label{fig:overview}
\end{figure*}

Our framework, illustrated in Fig.~\ref{fig:overview}, consists of three tightly coupled components. A dual-stream vision encoder $f_\phi$ compresses streaming observations and the goal image into a context vector $\mathbf{c}$ (\S\ref{sec:problem}). A learned variational prior network $g_\psi$ then maps $\mathbf{c}$ to a coarse action initialization $\mathbf{a}_T$ (\S\ref{sec:problem}). Finally, a conditional velocity network $\mathbf{v}_\theta$ drives an $\varepsilon$-rectified Schr\"odinger Bridge that progressively refines $\mathbf{a}_T$ into the output trajectory $\mathbf{a}_0$ in as few as 3 ODE steps (\S\ref{sec:bridge}--\ref{sec:inference}).

\subsection{Problem Formulation for Visual Navigation}
\label{sec:problem}

We formulate visual navigation as a conditional generative modeling problem. At each decision step, the agent receives a streaming observation sequence $\mathcal{O} = \{I_{t-C}, \ldots, I_t\}$ of monocular RGB images and a goal image $I_g$, and must produce an action trajectory $\mathbf{a}_0 \in \mathbb{R}^{H \times 2}$ representing $H$ future waypoints in local coordinates. A perception encoder maps visual inputs into a compact context vector:
\begin{align}
    \mathbf{c} = f_\phi(\mathcal{O}, I_g) \in \mathbb{R}^d
\end{align}

Rather than generating $\mathbf{a}_0$ from uninformative Gaussian noise, a scheme that typically requires many denoising steps, we anchor the generative process to an informative conditional prior. A learned variational prior network produces a structured initialization $\mathbf{a}_T$ that coarsely captures navigational intent:
\begin{align}
    \mathbf{a}_T = g_\psi(\mathbf{z}, \mathbf{c}), \quad \mathbf{z} \sim q_\psi(\mathbf{z} \mid \mathbf{c}, \mathbf{a}_0) \;\text{(train)}\;/\;\mathcal{N}(\mathbf{0}, \mathbf{I})\;\text{(test)} \label{eq:prior}
\end{align}

This prior $\mathbf{a}_T$ serves as the terminal boundary condition ($t = T$) for our diffusion bridge: the model need only learn the \emph{residual refinement} from a coarse prior to the precise ground-truth trajectory, substantially shortening the effective transport distance.

\subsection{The $\varepsilon$-Rectified Conditional Bridge Kernel}
\label{sec:bridge}

Standard Diffusion Bridge Models construct a Brownian Bridge between $\mathbf{a}_0$ and $\mathbf{a}_T$ with variance $\sigma_t^2 = t^2(1 - t^2/\sigma_{\max}^2)$. These high-variance stochastic paths entangle intermediate states and necessitate many integration steps for accurate recovery. To rectify this, we introduce a structural regularization parameter $\varepsilon \in (0,1]$ that explicitly controls the path geometry. The forward transition kernel of our rectified bridge is:
\begin{align}
    q_\varepsilon(\mathbf{a}_t \mid \mathbf{a}_0, \mathbf{a}_T) &= \mathcal{N}\!\left(\boldsymbol{\mu}_t,\; \sigma_{\varepsilon,t}^2\, \mathbf{I}\right) \label{eq:kernel}
\end{align}
where the mean seamlessly interpolates between data and prior:
\begin{align}
    \boldsymbol{\mu}_t &= s_t\, \mathbf{a}_T + (1-s_t)\, \mathbf{a}_0, \quad s_t = \frac{t^2}{\sigma_{\max}^2} \label{eq:mean}
\end{align}
and the variance is scaled by $\varepsilon$ to enforce trajectory straightness:
\begin{align}
    \sigma_{\varepsilon,t}^2 &= \varepsilon \cdot t^2\!\left(1 - s_t\right) \label{eq:var}
\end{align}

\textbf{Boundary conditions.} For any $\varepsilon \in (0,1]$, we have $\sigma_{\varepsilon,0}^2 = 0$ (exact data recovery) and $\sigma_{\varepsilon,\sigma_{\max}}^2 = 0$ (exact prior matching), ensuring the bridge correctly pins both endpoints.

\textbf{Geometric interpretation.} Setting $\varepsilon = 1$ recovers the standard Brownian Bridge. As $\varepsilon \to 0$, the kernel collapses to $\delta(\mathbf{a}_t - \boldsymbol{\mu}_t)$, the deterministic displacement interpolant of Monge--Kantarovich optimal transport. Intermediate values smoothly trade off between stochastic diversity and path straightness. This formulation is grounded in entropic optimal transport; the complete derivation from stochastic optimal control principles is in Section~\ref{app:theory}.

\subsection{Conditional Velocity Field and Flow Matching Objective}
\label{sec:velocity}

To enable scalable training, we derive the conditional velocity field of the rectified bridge and formulate a simulation-free Flow Matching objective. Using the reparameterization $\mathbf{a}_t = \boldsymbol{\mu}_t + \sigma_{\varepsilon,t}\,\boldsymbol{\epsilon}$ with $\boldsymbol{\epsilon} \sim \mathcal{N}(\mathbf{0}, \mathbf{I})$, the target velocity is:
\begin{align}
    \mathbf{v}_t^*(\mathbf{a}_t \mid \mathbf{a}_0, \mathbf{a}_T) &= \frac{d\boldsymbol{\mu}_t}{dt} + \frac{d\log\sigma_{\varepsilon,t}}{dt}\,(\mathbf{a}_t - \boldsymbol{\mu}_t) \label{eq:vel}
\end{align}
Computing the key derivatives (complete algebra in Section~\ref{app:vel_full}):
\begin{align}
    \frac{d\boldsymbol{\mu}_t}{dt} &= \frac{2t}{\sigma_{\max}^2}\,(\mathbf{a}_T - \mathbf{a}_0) \label{eq:dmu} \\
    \frac{d\log\sigma_{\varepsilon,t}}{dt} &= \frac{1 - 2s_t}{t\,(1-s_t)} \label{eq:dlog}
\end{align}

\begin{theorem}[Velocity Structure Invariance]
\label{thm:cancel}
\normalfont For the $\varepsilon$-rectified bridge family $\{q_\varepsilon\}_{\varepsilon \in (0,1]}$ (Eq.~\ref{eq:kernel}), the logarithmic derivative of the standard deviation satisfies $d\log\sigma_{\varepsilon,t}/dt = (1 - 2s_t)/[t\,(1-s_t)]$, which is \emph{independent of} $\varepsilon$. Consequently, the functional form of the conditional velocity $\mathbf{v}_t^*$ (Eq.~\ref{eq:vel}) is invariant across the entire $\varepsilon$-spectrum: a single velocity network parameterization is valid for all $\varepsilon \in (0,1]$.
\end{theorem}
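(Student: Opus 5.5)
The plan is to compute the logarithmic derivative of $\sigma_{\varepsilon,t}$ directly from Eq.~\ref{eq:var}. The key observation is that $\varepsilon$ enters $\sigma_{\varepsilon,t}^2$ only as a multiplicative constant, so it becomes an additive constant under the logarithm and is killed by differentiation in $t$. Concretely, on the open interval $t\in(0,\sigma_{\max})$, where $\sigma_{\varepsilon,t}>0$, I would write
\begin{align}
\log\sigma_{\varepsilon,t} = \tfrac12\log\varepsilon + \log t + \tfrac12\log\!\left(1 - s_t\right).
\end{align}
Only the last two terms depend on $t$, and neither involves $\varepsilon$. This already gives the $\varepsilon$-independence claim, before any explicit computation.

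Next I would differentiate term by term, using $ds_t/dt = 2t/\sigma_{\max}^2 = 2s_t/t$. This gives $d\log\sigma_{\varepsilon,t}/dt = 1/t - s_t/\bigl(t(1-s_t)\bigr)$. Putting both terms over the common denominator $t(1-s_t)$ yields $(1-2s_t)/\bigl(t(1-s_t)\bigr)$, which is Eq.~\ref{eq:dlog}. I would also record that Eq.~\ref{eq:dmu} follows from Eq.~\ref{eq:mean} and contains no $\varepsilon$, since the mean $\boldsymbol{\mu}_t$ does not depend on $\varepsilon$.

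For the consequence, I would substitute both derivatives into Eq.~\ref{eq:vel}. The result, $\mathbf{v}_t^* = \frac{2t}{\sigma_{\max}^2}(\mathbf{a}_T-\mathbf{a}_0) + \frac{1-2s_t}{t(1-s_t)}(\mathbf{a}_t-\boldsymbol{\mu}_t)$, is a single function of $(t,\mathbf{a}_t,\mathbf{a}_0,\mathbf{a}_T)$ with no explicit $\varepsilon$. Hence one network $\mathbf{v}_\theta(\mathbf{a}_t,t,\mathbf{c})$ with one regression target formula serves every $\varepsilon\in(0,1]$. The parameter $\varepsilon$ then affects only the distribution of the training inputs $\mathbf{a}_t$, through the reparameterization $\mathbf{a}_t=\boldsymbol{\mu}_t+\sigma_{\varepsilon,t}\boldsymbol{\epsilon}$. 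As a remark, under that reparameterization the second term equals $\dot\sigma_{\varepsilon,t}\boldsymbol{\epsilon}$, which scales as $\sqrt{\varepsilon}$; this prepares the ground for Proposition~\ref{prop:var}.

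The main obstacle is not algebraic but interpretive, together with the boundaries. First, "invariance of functional form" must be stated precisely: the map from $(t,\mathbf{a}_t,\mathbf{a}_0,\mathbf{a}_T)$ to the target is identical for all $\varepsilon$, while the regressed marginal field depends on $\varepsilon$ through the input law. I would state that distinction explicitly. Second, the expression in Eq.~\ref{eq:dlog} is singular at $t=0$ and $t=\sigma_{\max}$, where $\sigma_{\varepsilon,t}=0$. I would restrict the statement to the open interval and note that $\sigma_{\varepsilon,t}(\mathbf{a}_t-\boldsymbol{\mu}_t)$ stays bounded along the reparameterized paths, so the product $\frac{d\log\sigma_{\varepsilon,t}}{dt}(\mathbf{a}_t-\boldsymbol{\mu}_t) = \dot\sigma_{\varepsilon,t}\boldsymbol{\epsilon}$ remains well-defined there.
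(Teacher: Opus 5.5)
Your argument is correct and is essentially the paper's proof. The paper differentiates $\sigma_{\varepsilon,t}=\sqrt{\varepsilon}\,t\sqrt{1-s_t}$ and divides by $\sigma_{\varepsilon,t}$ so that $\sqrt{\varepsilon}$ cancels. You take logarithms first, so $\tfrac12\log\varepsilon$ is an additive constant that $d/dt$ removes; this is the same computation in a slightly cleaner order.

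One side remark needs correcting. Along reparameterized paths the second velocity term is $\dot\sigma_{\varepsilon,t}\boldsymbol{\epsilon}$, with $\dot\sigma_{\varepsilon,t}=\sqrt{\varepsilon}(1-2s_t)/\sqrt{1-s_t}$. This tends to $\sqrt{\varepsilon}$ as $t\to 0$, but diverges like $(\sigma_{\max}-t)^{-1/2}$ as $t\to\sigma_{\max}$. So the term is \emph{not} bounded at the prior endpoint. This is the same blow-up the paper acknowledges for the variance $\varepsilon(1-2s_t)^2/(1-s_t)$ as $s_t\to 1$. The boundedness claim holds only at $t=0$.

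The theorem concerns only $t\in(0,\sigma_{\max})$, so restrict to the open interval and drop that claim. The rest of your proposal is sound, including the distinction between the $\varepsilon$-free target map and the $\varepsilon$-dependent input law.
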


\begin{proof}
Since $\sigma_{\varepsilon,t} = \sqrt{\varepsilon}\, t\sqrt{1\!-\!s_t}$, the derivative is $d\sigma_{\varepsilon,t}/dt = \sqrt{\varepsilon}(1\!-\!2s_t)/\sqrt{1\!-\!s_t}$. Their ratio is
$\frac{d\sigma_{\varepsilon,t}/dt}{\sigma_{\varepsilon,t}} = \frac{1-2s_t}{t(1-s_t)},$
where $\sqrt{\varepsilon}$ cancels exactly. Full derivation in Section~\ref{app:vel_full}.
\end{proof}

Theorem~\ref{thm:cancel} shows that SB ($\varepsilon\!=\!1$) and FM ($\varepsilon\!\to\!0$) share the same conditional velocity functional form: $\varepsilon$ controls the \emph{support} of training samples $\mathbf{a}_t$ via $\sigma_{\varepsilon,t}$ (Eq.~\ref{eq:var}), while leaving the velocity field's structure intact. In practice, $\varepsilon$ acts as a spatial support constrictor, yielding a lower-variance learning problem when $\varepsilon < 1$ (Proposition~\ref{prop:var}).

We parameterize a neural velocity network $\mathbf{v}_\theta$, implemented as a Conditional U-Net 1D with FiLM conditioning on $\mathbf{c}$, and train via the simulation-free Conditional Flow Matching loss:
\begin{align}
    \mathcal{L}_{\text{RSBM}} = \mathbb{E}_{t,\,\mathbf{a}_0,\,\mathbf{a}_T,\,\boldsymbol{\epsilon}} \!\left[\left\|\mathbf{v}_\theta(\mathbf{a}_t, t, \mathbf{c}) - \mathbf{v}_t^*\right\|^2\right] \label{eq:loss}
\end{align}
Crucially, $v$-prediction directly parameterizes the ODE velocity field, avoiding the score-to-drift conversion required by $x_0$-prediction approaches. We empirically show this yields 35.6\% lower error at $k=3$; see Section~\ref{sec:vpred}.

\subsection{Few-Step ODE Inference and Error Analysis}
\label{sec:inference}

At inference, action trajectories are generated by solving the Probability Flow ODE from $\mathbf{a}_T$ to $\mathbf{a}_0$:
\begin{align}
    d\mathbf{a}_t = \mathbf{v}_\theta(\mathbf{a}_t, t, \mathbf{c})\,dt \label{eq:ode}
\end{align}
using a second-order Heun solver over a Karras timestep schedule $\{t_0 > t_1 > \cdots > t_k = 0\}$:
\begin{align}
    \mathbf{d}_1 &= \mathbf{v}_\theta(\mathbf{a}_{t_i}, t_i, \mathbf{c}), \qquad
    \tilde{\mathbf{a}}_{t_{i+1}} = \mathbf{a}_{t_i} + \mathbf{d}_1\,(t_{i+1} - t_i) \label{eq:heun1} \\
    \mathbf{d}_2 &= \mathbf{v}_\theta(\tilde{\mathbf{a}}_{t_{i+1}}, t_{i+1}, \mathbf{c}), \quad
    \mathbf{a}_{t_{i+1}} = \mathbf{a}_{t_i} + \tfrac{1}{2}(\mathbf{d}_1 + \mathbf{d}_2)\,(t_{i+1} - t_i) \label{eq:heun2}
\end{align}
Each step requires two function evaluations; the final evaluation of one step is reused as the first evaluation of the next (FSAL), yielding $\mathrm{NFE} = 2k - 1$ for $k$ steps.

\textbf{Why does $\varepsilon < 1$ enable fewer steps?} We formalize this via the following result.

\begin{proposition}[Velocity Variance Reduction]
\label{prop:var}
\normalfont For the $\varepsilon$-rectified bridge kernel $q_\varepsilon$ (Eq.~\ref{eq:kernel}), the conditional variance of the target velocity field satisfies:
\begin{align}
    \mathrm{Var}\!\left[\mathbf{v}_t^* \mid \mathbf{a}_0, \mathbf{a}_T\right] = \varepsilon \cdot \frac{(1-2s_t)^2}{1-s_t} \cdot \mathbf{I}_D \label{eq:var_vel}
\end{align}
where $D = 2H$ denotes the flattened trajectory dimension. In particular, reducing $\varepsilon$ linearly reduces the stochastic variation of the velocity field over the training distribution.
\end{proposition}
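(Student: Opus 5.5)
The plan is to compute the variance directly from the reparameterization already used in Section~\ref{sec:velocity}, treating $t$, $\mathbf{a}_0$ and $\mathbf{a}_T$ as fixed. Once we condition on them, the only remaining randomness in $\mathbf{v}_t^*$ comes from the Gaussian noise $\boldsymbol{\epsilon}$.

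First, substitute $\mathbf{a}_t = \boldsymbol{\mu}_t + \sigma_{\varepsilon,t}\boldsymbol{\epsilon}$ into Eq.~\eqref{eq:vel}. This gives
\begin{align*}
\mathbf{v}_t^* = \frac{d\boldsymbol{\mu}_t}{dt} + \frac{d\log\sigma_{\varepsilon,t}}{dt}\,\sigma_{\varepsilon,t}\,\boldsymbol{\epsilon} = \frac{d\boldsymbol{\mu}_t}{dt} + \frac{d\sigma_{\varepsilon,t}}{dt}\,\boldsymbol{\epsilon}.
\end{align*}
By Eq.~\eqref{eq:dmu}, the first term is deterministic given $(\mathbf{a}_0,\mathbf{a}_T,t)$, so it does not contribute to the conditional variance. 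Since $\mathrm{Cov}(\boldsymbol{\epsilon})=\mathbf{I}_D$, with $D=2H$ once the $H\times 2$ trajectory is flattened, we get $\mathrm{Var}[\mathbf{v}_t^*\mid\mathbf{a}_0,\mathbf{a}_T] = (d\sigma_{\varepsilon,t}/dt)^2\,\mathbf{I}_D$.

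Second, reuse the derivative computed in the proof of Theorem~\ref{thm:cancel}. Writing $\sigma_{\varepsilon,t}=\sqrt{\varepsilon}\,t\sqrt{1-s_t}$ and using $ds_t/dt = 2s_t/t$, we have
\begin{align*}
\frac{d\sigma_{\varepsilon,t}}{dt} = \sqrt{\varepsilon}\left(\sqrt{1-s_t} - \frac{t\, s_t/t}{\sqrt{1-s_t}}\right) = \sqrt{\varepsilon}\,\frac{1-2s_t}{\sqrt{1-s_t}}.
\end{align*}
Squaring gives $\varepsilon(1-2s_t)^2/(1-s_t)$, which is Eq.~\eqref{eq:var_vel}. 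The factor $\varepsilon$ enters multiplicatively while every other factor depends only on $t$. Hence, at each fixed $t$, the variance is linear in $\varepsilon$. Taking an outer expectation over $t$ and the endpoints preserves this linearity, which yields the stated reduction over the training distribution.

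The only delicate point is the range of validity rather than the algebra. The expression is defined for $t\in[0,\sigma_{\max})$, and it blows up as $s_t\to 1$, i.e.\ as $t\to\sigma_{\max}$, because $d\sigma/dt$ diverges there. I would state the result pointwise for $t<\sigma_{\max}$. The linear-in-$\varepsilon$ claim should be read as a ratio statement: for each fixed $t$, the variance at $\varepsilon$ equals $\varepsilon$ times the variance at $\varepsilon=1$. This ratio statement avoids integrating the singular factor, so integrability of $(1-2s_t)^2/(1-s_t)$ near $\sigma_{\max}$ is never needed; the corresponding relation for the averaged variance depends on the time-sampling distribution giving finite weight near the endpoint.
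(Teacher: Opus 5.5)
Your proof is correct and follows the paper's argument. The paper likewise isolates the stochastic term $\frac{d\log\sigma_{\varepsilon,t}}{dt}\,\sigma_{\varepsilon,t}\boldsymbol{\epsilon}$ and evaluates $(d\log\sigma_{\varepsilon,t}/dt)^2\,\sigma_{\varepsilon,t}^2$ from the variance and log-derivative formulas; you simply collapse the product to $(d\sigma_{\varepsilon,t}/dt)^2$ first, which is the same computation. Your caveat about $t\to\sigma_{\max}$ is a genuine refinement that the paper only mentions informally: with $t$ uniform, the $t$-averaged variance actually diverges logarithmically for every $\varepsilon>0$, so stating the linear-in-$\varepsilon$ claim pointwise in $t$, as you do, is the right reading.
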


\begin{proof}
From Eq.~\eqref{eq:vel}, the stochastic component of $\mathbf{v}_t^*$ is $\frac{d\log\sigma_{\varepsilon,t}}{dt} \cdot \sigma_{\varepsilon,t}\,\boldsymbol{\epsilon}$, where $\boldsymbol{\epsilon} \sim \mathcal{N}(\mathbf{0}, \mathbf{I})$. The per-component variance is $\left(\frac{d\log\sigma_{\varepsilon,t}}{dt}\right)^2 \!\cdot\, \sigma_{\varepsilon,t}^2$. Substituting Eqs.~\eqref{eq:var} and~\eqref{eq:dlog}:
$\frac{(1\!-\!2s_t)^2}{t^2(1\!-\!s_t)^2} \cdot \varepsilon\, t^2(1\!-\!s_t) = \varepsilon \cdot \frac{(1\!-\!2s_t)^2}{1\!-\!s_t}.$
\end{proof}

Proposition~\ref{prop:var} has two practical implications. \emph{First}, lower velocity variance means the training target varies less across the bridge support, enabling the network to achieve better approximation quality with the same capacity. \emph{Second}, in practice, smaller $\varepsilon$ keeps trajectories closer to the interpolant $\boldsymbol{\mu}_t$, producing lower-curvature paths that are easier to integrate with finite-step solvers. Section~\ref{app:bound} provides the corresponding sampling-error decomposition and its direct connection to $\varepsilon$.

\section{Theoretical Analysis}
\label{app:theory}

This section derives the theoretical foundations connecting stochastic optimal control, Schr\"odinger Bridges, and our $\varepsilon$-rectified formulation.

\subsection{Generative Modeling as Stochastic Optimal Control}
\label{app:soc}

We cast trajectory generation as a Stochastic Optimal Control (SOC) problem. The goal is to find an optimal control $u_t$ steering a base distribution towards a structured prior while minimizing path cost:
\begin{align}
    \min_{u} \quad & \mathbb{E}_{X \sim p^u} \left[ \int_0^1 \frac{1}{2} \|u_t(X_t)\|^2 dt + g(X_1) \right] \\
    \text{s.t.} \quad & dX_t = \left( f_t(X_t) + \sigma_t u_t(X_t) \right) dt + \sigma_t dW_t
\end{align}
with $X_0 \sim p_{\text{data}}$. By the Hamilton-Jacobi-Bellman equation, the optimal control satisfies $u^\star_t(x) = -\sigma_t \nabla V_t(x)$, where $V_t$ is the value function. Under a memoryless relaxation the terminal cost simplifies to $g(x) = \log (p_{\text{base}}^1(x)/p_{\text{prior}}(x))$, connecting SOC to entropic regularized transport.

\subsection{Duality with Schr\"odinger Bridges}
\label{app:sb}

The Schr\"odinger Bridge framework recasts this SOC problem as an entropy-regularized optimal transport. Under SB optimality, dynamics are governed by coupled Schr\"odinger potentials $\phi_t$ and $\hat{\phi}_t$, unified with SOC via the Hopf-Cole transform $\phi_t(x) = \exp(-V_t(x))$. The resulting conditional bridge distribution is Gaussian:
\begin{align}
    q(X_t \mid X_0, X_1) = \mathcal{N}\!\big((1{-}s_t) X_0 + s_t X_1,\; t^2(1{-}s_t)\mathbf{I}\big)
\end{align}
with $s_t = t^2/\sigma_{\max}^2$. The variance $t^2(1{-}s_t)$ controls stochastic transport diffusivity: it vanishes at both endpoints (exact pinning) and peaks at the midpoint, where transport is most diffuse. This motivates our $\varepsilon$-rectification, which directly scales this variance.

\subsection{Connection to the $\varepsilon$-Rectified Kernel}
\label{app:connection}

Building on the bridge kernel derived in Section~\ref{app:sb}, our $\varepsilon$-rectification replaces the variance $t^2(1-s_t)$ with:
\begin{align}
    \sigma_{\varepsilon,t}^2 = \varepsilon \cdot t^2 (1 - s_t), \quad \varepsilon \in (0, 1] \label{eq:eps_var}
\end{align}
This modulates the entropic regularization strength of the Schr\"odinger Bridge~\cite{chen2016entropic}. Specifically, the SB problem with entropic cost $\gamma \operatorname{KL}(p \,\|\, p_{\text{ref}})$ produces a family of solutions indexed by $\gamma$; our $\varepsilon$ plays a role analogous to $\gamma / \gamma_0$ where $\gamma_0$ is the reference regularization. This interpolates between two extremes:
\begin{itemize}
    \item $\varepsilon = 1$: Standard Brownian Bridge (maximum entropy, full stochastic transport).
    \item $\varepsilon \to 0$: Deterministic displacement interpolant $\boldsymbol{\mu}_t$ (minimum entropy, Monge OT map).
\end{itemize}
By choosing $\varepsilon \in (0, 1)$, we retain the bridge structure and boundary conditions while concentrating probability mass near the geodesic connecting $\mathbf{a}_0$ and $\mathbf{a}_T$. This provides a principled mechanism for trading off generation diversity against path straightness.

\subsubsection{Formal KL Divergence Connection}
\label{app:kl_formal}

We now rigorize the entropic regularization interpretation. Consider the Schr\"odinger Bridge problem with regularization strength $\gamma > 0$:
\begin{align}
    \min_{p \in \mathcal{P}(\Omega)} \quad & \int c(\omega)\, dp(\omega) + \gamma\, \operatorname{KL}(p \,\|\, p_{\text{ref}}) \label{eq:sb_entropy}
\end{align}
where $c(\omega) = \frac{1}{2}\int_0^T \|u_t\|^2\,dt$ is the kinetic energy cost, $p_{\text{ref}}$ is the Brownian Bridge reference, and $\Omega$ is the path space. The solution satisfies $p^\star \propto p_{\text{ref}} \cdot \exp(-c/\gamma)$, with conditional kernel:
\begin{align}
    p^\star(\mathbf{a}_t \mid \mathbf{a}_0, \mathbf{a}_T) = \mathcal{N}\!\left(\boldsymbol{\mu}_t,\; \tfrac{\gamma}{\gamma_0}\, t^2(1\!-\!s_t)\,\mathbf{I}\right)
\end{align}
where $\gamma_0$ is the reference regularization yielding unit bridge variance. Identifying $\varepsilon = \gamma/\gamma_0$, we obtain $\sigma_{\varepsilon,t}^2 = \varepsilon \cdot t^2(1-s_t)$, recovering Eq.~\eqref{eq:eps_var}. The KL cost between the $\varepsilon$-rectified and standard bridges admits a closed form:
\begin{align}
    &\operatorname{KL}\!\left(q_\varepsilon(\mathbf{a}_t \mid \mathbf{a}_0, \mathbf{a}_T) \,\big\|\, q_1(\mathbf{a}_t \mid \mathbf{a}_0, \mathbf{a}_T)\right) \nonumber\\
    &\quad= \frac{D}{2}\left(\varepsilon - 1 - \log \varepsilon\right) \label{eq:kl}
\end{align}
where $D$ is the action space dimensionality. This is strictly non-negative for $\varepsilon \neq 1$ and monotonically increases as $\varepsilon \to 0$, quantifying the information cost of rectification. For our default $\varepsilon = 0.5$ with $D = 16$ (8 waypoints $\times$ 2 dimensions), $\operatorname{KL} = 8 \times (0.5 - 1 - \log 0.5) = 1.55$ nats, a moderate cost that substantially reduces transport curvature while preserving sufficient stochasticity for multimodal generation.

The boundary conditions $\sigma_{\varepsilon,0}^2 = \sigma_{\varepsilon,\sigma_{\max}}^2 = 0$ hold for any $\varepsilon > 0$, ensuring exact endpoint pinning (see Section~III-B).

\subsection{Probability Flow ODE Derivation}
\label{app:pfode}

The marginal-preserving PF-ODE~\cite{song2020score} replaces stochastic dynamics with a deterministic flow. For our $\varepsilon$-rectified bridge, substituting the conditional score $\nabla_{\mathbf{a}_t} \log p(\mathbf{a}_t \mid \mathbf{a}_0, \mathbf{a}_T) = -(\mathbf{a}_t - \boldsymbol{\mu}_t)/\sigma_{\varepsilon,t}^2$ yields the velocity field in Eq.~\eqref{eq:vel}. Since $d\log\sigma_{\varepsilon,t}/dt$ is $\varepsilon$-independent (Section~\ref{app:vel_full}), the same functional form applies for all $\varepsilon$; only the training input distribution changes. Inference integrates backward via Heun's method with $k$ steps (NFE$\,{=}\,2k{-}1$).

\subsection{$v$-Prediction: Signal-to-Noise Ratio Analysis}
\label{app:vpred_analysis}

The velocity field $\mathbf{v}_t^*$ admits three reparameterizations. \textbf{$\varepsilon$-prediction} predicts noise $\hat{\boldsymbol{\epsilon}}$; near $t \approx 0$, small errors are amplified. \textbf{$x_0$-prediction} predicts $\hat{\mathbf{a}}_0$; near $t \approx \sigma_{\max}$, SNR is low. \textbf{$v$-prediction (Ours)} directly predicts velocity, naturally balancing drift and stochastic terms:
\begin{align}
    \text{SNR}_v(t) = \frac{\|d\boldsymbol{\mu}_t/dt\|^2}{\varepsilon\,(1-2s_t)^2/(1-s_t)}
\end{align}
This ratio is well-behaved across $t \in (0, \sigma_{\max})$, avoiding boundary singularities of both alternatives and directly minimizing ODE integration error.

\subsection{Full Velocity Field Derivation}
\label{app:vel_full}

The four-step derivation (mean derivative, standard-deviation derivative, logarithmic derivative with $\varepsilon$ cancellation, and final velocity assembly) is given in the supplementary material (Section~II-A). The key result is that $d\log\sigma_{\varepsilon,t}/dt = (1-2s_t)/[t(1-s_t)]$, where $\sqrt{\varepsilon}$ cancels exactly, confirming the $\varepsilon$-invariance stated in Theorem~\ref{thm:cancel}.

\subsection{Discussion of Proposition~\ref{prop:var}}
\label{app:prop_proof}

The variance $V(s_t) = \varepsilon \cdot (1-2s_t)^2/(1-s_t)$ vanishes at $s_t = 1/2$ and is bounded by $\varepsilon$ near $s_t \to 0$. Reducing $\varepsilon$ to $0.5$ halves this variance uniformly, (i)~reducing regression difficulty and (ii)~making the ODE smoother for finite-step solvers. This explains why the MSE gap between standard SB and RSBM is most pronounced at low $k$.

\subsection{Sampling Error Analysis}
\label{app:bound}

For a $k$-step Heun solver, the 2-Wasserstein distance satisfies $W_2(\hat{p}_0, p_0) \leq C_1 T \delta + C_2 L_\theta T^3/k^2$~\cite{lipman2023flow}, where $\delta^2 = \mathbb{E}\|\mathbf{v}_\theta - \mathbf{v}^*\|^2$. By Proposition~\ref{prop:var}, lowering $\varepsilon$ reduces target variance and hence both $\delta$ and the discretization error, consistent with Fig.~\ref{fig:eps}.

\subsection{Training and Inference Algorithms}
\label{app:algorithms}

\begin{figure}[!t]
\centering
\fbox{\parbox{\columnwidth}{
\textbf{Algorithm 1}: RSBM Training \label{alg:train}
\hrule\textbf{Input:} Dataset $\mathcal{D}$, $\sigma_{\max}$, $\varepsilon$, learning rate $\eta$ \\
\textbf{repeat} \\
\quad 1. Sample $({\mathcal{O}, I_g, \mathbf{a}_0}) \sim \mathcal{D}$; \quad $\mathbf{c} = f_\phi(\mathcal{O}, I_g)$ \\
\quad 2. $\mathbf{a}_T = g_\psi(\mathbf{c}, \mathbf{z})$, $\mathbf{z} \sim q_\psi(\mathbf{z} \mid \mathbf{c}, \mathbf{a}_0)$; \quad $t \sim \mathcal{U}(\sigma_{\min}, \sigma_{\max})$ \\
\quad 3. $s_t = t^2/\sigma_{\max}^2$; \quad $\boldsymbol{\mu}_t = (1-s_t)\mathbf{a}_0 + s_t\mathbf{a}_T$ \\
\quad 4. $\mathbf{a}_t = \boldsymbol{\mu}_t + \sqrt{\varepsilon}\,t\sqrt{1-s_t}\;\boldsymbol{\epsilon}$, \quad $\boldsymbol{\epsilon} \sim \mathcal{N}(\mathbf{0}, \mathbf{I})$ \\
\quad 5. $\mathbf{v}_t^* = \frac{2t}{\sigma_{\max}^2}(\mathbf{a}_T - \mathbf{a}_0) + \frac{1-2s_t}{t(1-s_t)}(\mathbf{a}_t - \boldsymbol{\mu}_t)$ \\
\quad 6. $(\phi,\psi,\theta) \leftarrow (\phi,\psi,\theta) - \eta\nabla[\|\mathbf{v}_\theta(\mathbf{a}_t,t,\mathbf{c}) - \mathbf{v}_t^*\|^2 + \mathcal{L}_{\text{prior}}]$ \\
\textbf{until} converged \\[2mm]
\textbf{Algorithm 2}: RSBM Inference \label{alg:infer}
\hrule\textbf{Input:} $\mathcal{O}$, $I_g$, networks $(f_\phi, g_\psi, \mathbf{v}_\theta)$, steps $k$ \\
1. $\mathbf{c} = f_\phi(\mathcal{O}, I_g)$; \quad $\mathbf{a}_{t_0} = g_\psi(\mathbf{c}, \mathbf{z})$, $\mathbf{z}\sim\mathcal{N}(\mathbf{0},\mathbf{I})$ \\
2. \textbf{for} $i = 0,\ldots,k{-}1$: Heun step with $\mathbf{v}_\theta$ \\
\quad $\mathbf{d}_1 = \mathbf{v}_\theta(\mathbf{a}_{t_i}, t_i, \mathbf{c})$; \quad $\tilde{\mathbf{a}} = \mathbf{a}_{t_i} - \Delta t \cdot \mathbf{d}_1$ \\
\quad $\mathbf{d}_2 = \mathbf{v}_\theta(\tilde{\mathbf{a}}, t_{i+1}, \mathbf{c})$; \quad $\mathbf{a}_{t_{i+1}} = \mathbf{a}_{t_i} - \frac{\Delta t}{2}(\mathbf{d}_1 + \mathbf{d}_2)$ \\
3. \textbf{return} $\hat{\mathbf{a}}_0 = \mathbf{a}_{t_k}$ \quad (NFE $= 2k-1$)
}}
\end{figure}

\section{Experiments}
\label{sec:experiments}

\subsection{Experimental Setup}
\label{sec:setup}

\textbf{Datasets.} We evaluate on five public navigation datasets (HuRoN, Recon, SACSoN, SCAND, GoStanford; ${\sim}$60k trajectories total)~\cite{shah2023vint, sridhar2024nomad}, plus a Gazebo-based \emph{Custom Indoor} (500/100 episodes) and outdoor \emph{CitySim} (400/80). The agent receives $96 \times 96$ RGB images and predicts 8-step waypoints. $\varepsilon\!=\!0.5$ is selected on Custom Indoor and fixed for all environments.

\textbf{Baselines \& Metrics.} We compare against ViNT~(1-shot), NoMaD~($k\!=\!20$), DDPM~($k\!=\!50$), CFM~($k\!=\!10$)~\cite{lipman2023flow}, and NaviBridger~($k\!=\!10$)~\cite{ren2025prior}, evaluating MSE$\downarrow$, CosSim$\uparrow$, FDE$\downarrow$, Col.\%$\downarrow$, Suc.\%$\uparrow$. All bridge-based methods share the same prior $\mathbf{a}_T$.

\textbf{Protocol.} All methods use identical data splits, hardware (RTX 4090), and 3 random seeds. Every baseline is evaluated at both default $k$ and $k\!=\!3$ (zero-shot step reduction, no retraining).

\subsection{Main Results}
\label{sec:main}

Table~\ref{tab:main} compares all methods at default $k$ and $k\!=\!3$. RSBM at $k\!=\!3$ (NFE=5) matches or surpasses baselines at full budgets; $k\!=\!10$ yields only marginal gains (MSE $1.90 \to 1.72$), confirming early saturation. RSBM achieves 92\% success and 0.945 CosSim with $3.8\times$ fewer NFEs than NaviBridger ($k\!=\!10$). Under zero-shot reduction, NaviBridger's CosSim drops from 0.942 to 0.710; DDPM falls to 0.320.

\begin{table*}[!t]
\centering
\small 
\setlength{\tabcolsep}{2.5pt}
\caption{\textbf{Comprehensive comparison.} Each method is shown at its default $k$ and at $k\!=\!3$, revealing degradation under zero-shot step reduction. \colorbox{gray!12}{Gray rows}: $k\!=\!3$ variants. RSBM is shown at both $k\!=\!3$ and $k\!=\!10$ to demonstrate early saturation. \textbf{Bold}: best overall per column.}
\label{tab:main}
\resizebox{\textwidth}{!}{%
\begin{tabular}{l c c | c c c c c | c c c c c}
\toprule
& & & \multicolumn{5}{c|}{\textbf{Custom Indoor}} & \multicolumn{5}{c}{\textbf{CitySim (Outdoor)}} \\
\cmidrule(lr){4-8} \cmidrule(lr){9-13}
\textbf{Method} & $k$ & NFE
  & MSE$\downarrow$ & CosSim$\uparrow$ & FDE$\downarrow$ & Col.\%$\downarrow$ & Suc.\%$\uparrow$
  & MSE$\downarrow$ & CosSim$\uparrow$ & FDE$\downarrow$ & Col.\%$\downarrow$ & Suc.\%$\uparrow$ \\
\midrule
ViNT~\cite{shah2023vint} (1-shot) &  1 &  1 & 6.50 & 0.720 & 2.85 & 1.58 & 28  & 8.20  & 0.650 & 4.50 & 0.41 & 38 \\
\midrule
NoMaD~\cite{sridhar2024nomad}   & 20 & 20 & 3.60 & 0.820 & 1.95 & 1.32 & 32  & 5.80  & 0.740 & 3.20 & 0.34 & 52 \\
\rowcolor{gray!12}
NoMaD~\cite{sridhar2024nomad}   &  3 &  6 & 8.40 & 0.610 & 3.75 & 2.25 & 18  & 10.50 & 0.540 & 5.20 & 0.55 & 22 \\
\midrule
DDPM~\cite{ho2020denoising}     & 50 & 50 & 3.80 & 0.820 & 2.05 & 0.98 & 64  & 5.50  & 0.750 & 3.10 & 0.35 & 50 \\
\rowcolor{gray!12}
DDPM~\cite{ho2020denoising}     &  3 &  6 & 14.80& 0.320 & 6.10 & 3.60 & 6   & 16.20 & 0.280 & 7.80 & 0.72 & 4 \\
\midrule
FM~\cite{lipman2023flow}        & 10 & 10 & 2.80 & 0.910 & 1.45 & 0.52 & 82  & 4.20  & 0.850 & 2.20 & 0.32 & 58 \\
\rowcolor{gray!12}
FM~\cite{lipman2023flow}        &  3 &  3 & 5.90 & 0.710 & 2.90 & 1.15 & 45  & 7.80  & 0.650 & 3.80 & 0.48 & 34 \\
\midrule
NaviBridger~\cite{ren2025prior} & 10 & 19 & \textbf{1.82} & 0.942 & 0.82 & 0.41 & 88  & \textbf{2.50} & 0.920 & 1.15 & 0.30 & 64 \\
\rowcolor{gray!12}
NaviBridger~\cite{ren2025prior}  &  3 &  5 & 12.00& 0.710 & 4.20 & 2.80 & 35  & 13.50 & 0.660 & 5.60 & 0.65 & 28 \\
\midrule
\rowcolor{red!8}
\textbf{RSBM (Ours)} & \textbf{3} & \textbf{5}
  & 1.90 & \textbf{0.945} & \textbf{0.80} & \textbf{0.38} & \textbf{92}
  & 2.55 & \textbf{0.925} & \textbf{1.10} & \textbf{0.28} & \textbf{68} \\
\rowcolor{red!4}
RSBM (Ours) & 10 & 19
  & \textbf{1.72} & 0.949 & 0.75 & 0.35 & 93
  & \textbf{2.40} & 0.930 & 1.05 & 0.26 & 70 \\
\bottomrule
\end{tabular}}
\end{table*}

Fig.~\ref{fig:nfe} visualizes the quality--cost Pareto frontier. RSBM at $k\!=\!3$ (NFE=5) achieves 0.945 CosSim and 92\% success rate, using $3.8\times$ fewer evaluations than NaviBridger at $k\!=\!10$. DDPM and NaviBridger degrade sharply under budget constraints, consistent with Proposition~\ref{prop:var}.
\subsection{Ablation: $\varepsilon$ Regularization}
\label{sec:eps}

Fig.~\ref{fig:eps} dissects $\varepsilon$. $\varepsilon = 1.0$ recovers standard SB with high-curvature paths; very small $\varepsilon$ over-regularizes. $\varepsilon = 0.5$ balances multimodal coverage with few-step fidelity.

\begin{figure}[!t]
  \centering
  \includegraphics[width=\columnwidth]{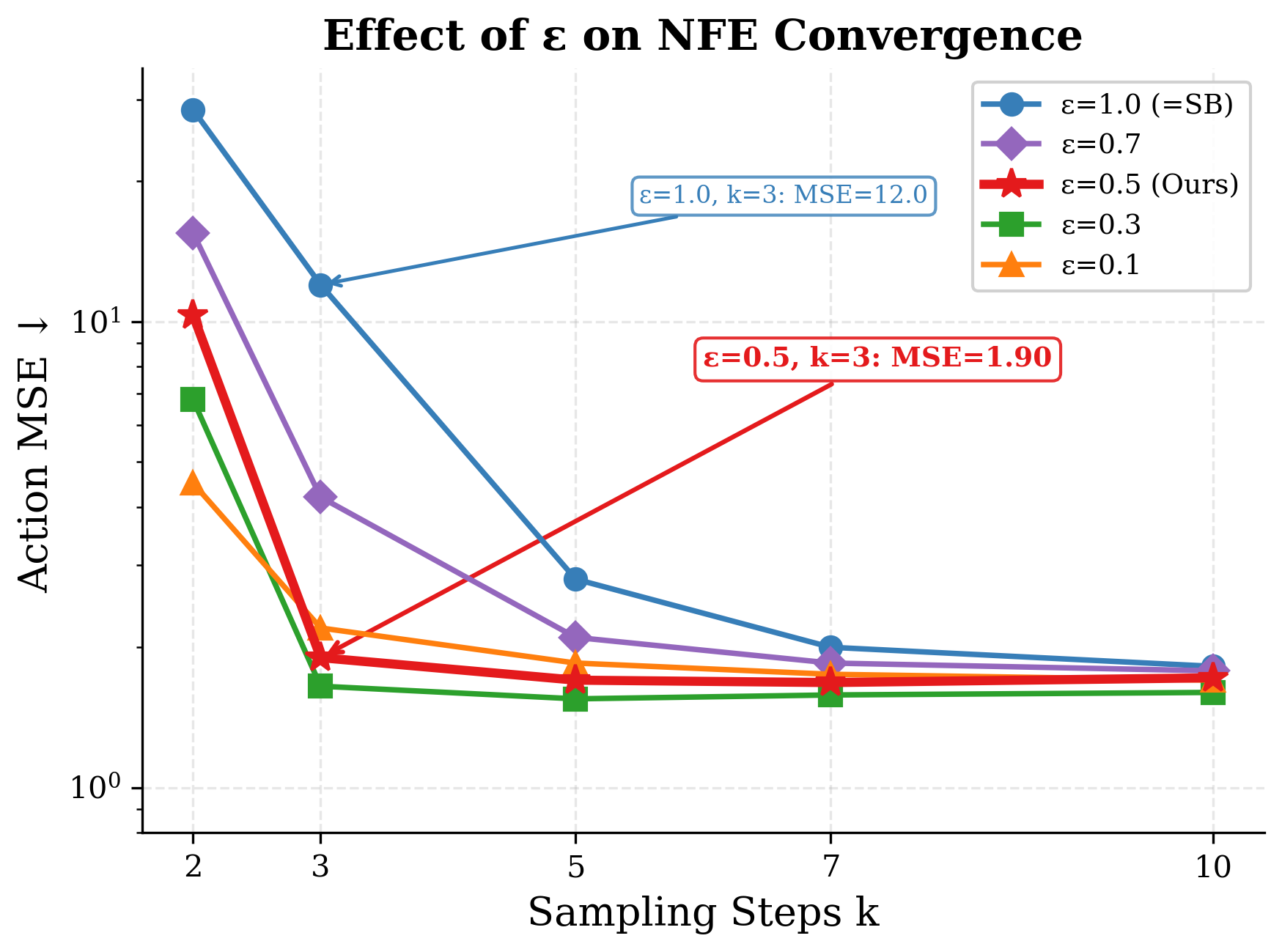}
  \caption{Effect of $\varepsilon$. $\varepsilon = 0.5$ (red) provides stable low-$k$ performance.}
  \label{fig:eps}
\end{figure}

\textbf{Disentangling prior and bridge contributions.} Table~\ref{tab:prior_abl} reports five configurations isolating the effect of the learned prior and $\varepsilon$-rectification. The learned prior reduces transport distance, lowering MSE from $12.0$ to $5.8$ ($2.1\times$), while $\varepsilon$-rectification straightens ODE paths, further lowering MSE from $5.8$ to $1.9$ ($3.1\times$). These gains are \emph{multiplicative}: neither component alone approaches the full system. Moreover, $\varepsilon$-rectification from Gaussian noise already achieves $2.9\times$ lower MSE than standard SB ($4.2$ vs.\ $12.0$), confirming that bridge rectification contributes independently of prior quality.
\subsection{Robustness Across Real-World Datasets}
\label{sec:dataset}

To validate that the advantage observed in Custom Indoor generalizes, Table~\ref{tab:dataset} breaks down Action MSE and CosSim across five diverse real-world datasets in the standard open-loop offline protocol of~\cite{shah2023vint, sridhar2024nomad}. At $k\!=\!3$, RSBM remains competitive with NaviBridger at $k\!=\!10$ across all five datasets: NaviBridger has an average MSE of $4.42$ and CosSim of $0.672$, while RSBM obtains $1.19$ and $0.934$. The gap is most pronounced on GoStanford, a long-range outdoor dataset, and SACSoN, which features dynamic obstacles, consistent with the variance reduction mechanism of Proposition~\ref{prop:var}.

\begin{figure*}[!t]
\centering
\includegraphics[width=\textwidth]{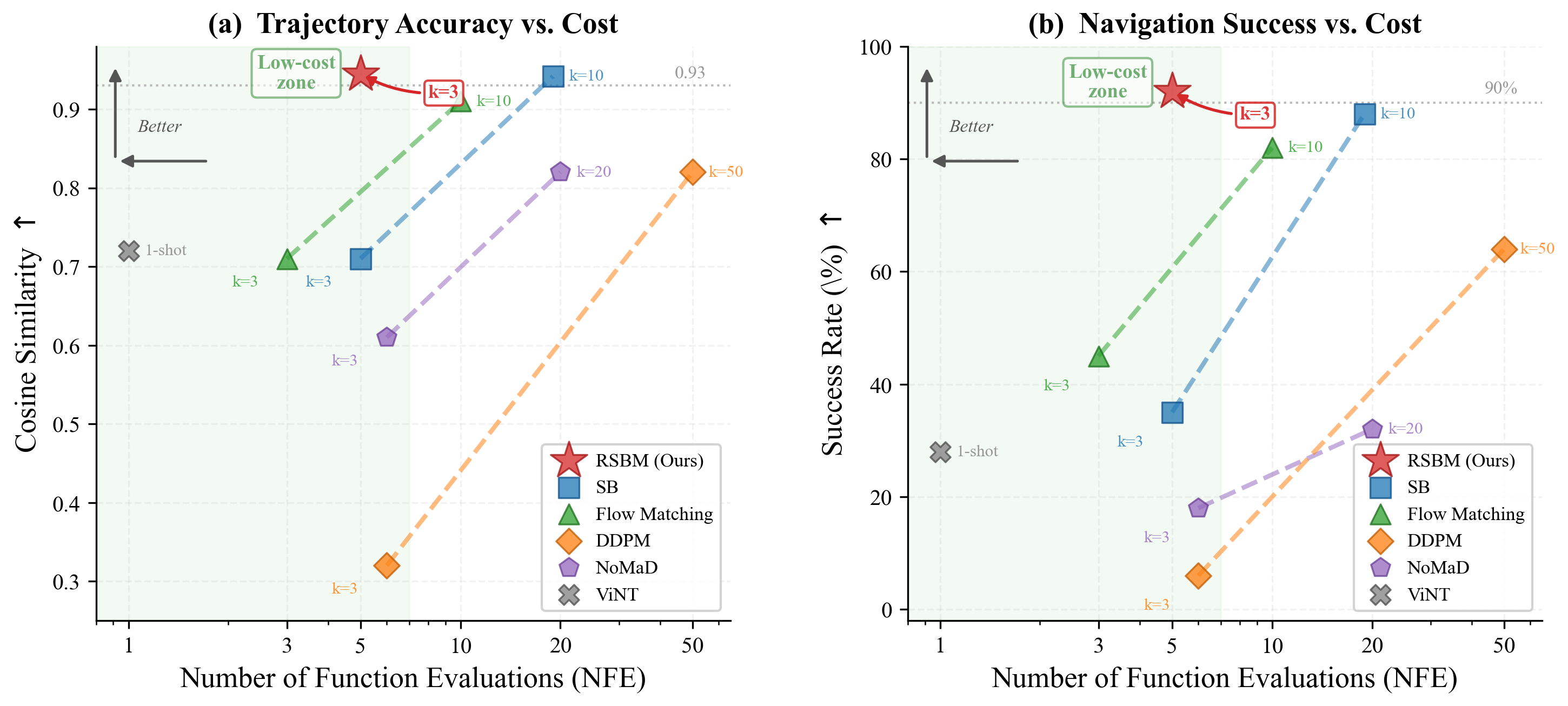}
\caption{\textbf{Quality--cost Pareto frontier.} Each marker represents a method at a given sampling budget ($k$). \textbf{(a)} CosSim vs.\ NFE; \textbf{(b)} Success Rate vs.\ NFE. RSBM at $k\!=\!3$ (NFE$\!=\!5$) lies on the favorable frontier region, providing strong quality at substantially lower evaluations.}
\label{fig:nfe}
\end{figure*}
\begin{table*}[!t]
\centering
\small
\setlength{\tabcolsep}{3pt}
\caption{\textbf{Per-dataset generalization.} Action MSE$\downarrow$ and CosSim$\uparrow$ across five diverse real-world datasets. RSBM($k\!=\!3$) consistently matches or exceeds NaviBridger($k\!=\!10$); NaviBridger($k\!=\!3$) degrades significantly across all domains. \textbf{Bold}: best per column among $k\!=\!3$ methods.}
\label{tab:dataset}
\resizebox{\textwidth}{!}{%
\begin{tabular}{l c | c c c c c c | c c c c c c}
\toprule
& & \multicolumn{6}{c|}{\textbf{Action MSE $\downarrow$}} & \multicolumn{6}{c}{\textbf{CosSim $\uparrow$}} \\
\cmidrule(lr){3-8} \cmidrule(lr){9-14}
\textbf{Method} & $k$
  & HuRoN & Recon & SACSoN & SCAND & GoStan. & \textbf{Avg}
  & HuRoN & Recon & SACSoN & SCAND & GoStan. & \textbf{Avg} \\
\midrule
DDPM~\cite{ho2020denoising}  & 10 & 1.38 & 1.19 & 1.74 & 0.72 & 4.12 & 1.83  & 0.880 & 0.900 & 0.860 & 0.920 & 0.780 & 0.868 \\
\rowcolor{gray!12}
DDPM~\cite{ho2020denoising}  &  3 & 8.50 & 6.20 & 9.80 & 4.60 & 15.30 & 8.88 & 0.350 & 0.420 & 0.280 & 0.480 & 0.210 & 0.348 \\
\midrule
FM~\cite{lipman2023flow}     & 10 & 1.10 & 0.97 & 1.55 & 0.65 & 3.65 & 1.58  & 0.910 & 0.920 & 0.885 & 0.935 & 0.820 & 0.894 \\
\rowcolor{gray!12}
FM~\cite{lipman2023flow}     &  3 & 3.20 & 2.60 & 4.10 & 1.80 & 7.50 & 3.84  & 0.720 & 0.740 & 0.670 & 0.790 & 0.560 & 0.696 \\
\midrule
NaviBridger~\cite{ren2025prior} & 10 & 0.27 & 0.88 & 1.43 & 0.59 & 3.22 & 1.28  & 0.955 & 0.935 & 0.920 & 0.960 & 0.875 & 0.929 \\
\rowcolor{gray!12}
NaviBridger~\cite{ren2025prior} &  3 & 2.80 & 3.45 & 5.20 & 2.15 & 8.50 & 4.42  & 0.750 & 0.690 & 0.620 & 0.780 & 0.520 & 0.672 \\
\midrule
\rowcolor{red!8}
\textbf{RSBM} & \textbf{3} & \textbf{0.25} & \textbf{0.82} & \textbf{1.35} & \textbf{0.48} & \textbf{3.05} & \textbf{1.19}  & \textbf{0.958} & \textbf{0.940} & \textbf{0.925} & \textbf{0.965} & \textbf{0.880} & \textbf{0.934} \\
\rowcolor{red!4}
RSBM      & 10 & 0.24 & 0.80 & 1.32 & 0.47 & 2.95 & 1.16  & 0.960 & 0.942 & 0.928 & 0.968 & 0.885 & 0.937 \\
\bottomrule
\end{tabular}}
\end{table*}
\subsection{Ablation: Prediction Target}
\label{sec:vpred}

Table~\ref{tab:vpred} compares three prediction targets under the same RSBM bridge ($\varepsilon=0.5$, Custom Indoor). $v$-prediction directly parameterizes the ODE velocity field, avoiding the score-to-drift conversion of $\epsilon$-prediction and the endpoint estimation bias of $x_0$-prediction. At $k\!=\!3$, $v$-prediction achieves \textbf{35.6\%} lower MSE than $x_0$-prediction and \textbf{45.7\%} lower than $\epsilon$-prediction. The gap narrows at $k\!=\!10$, and by $k\!=\!50$ all three targets converge (MSE 1.67 vs.\ 1.72 vs.\ 1.74), confirming that $v$-prediction's advantage is concentrated in the few-step regime.

\begin{table}[!t]
\centering
\small
\caption{Prediction target ablation (RSBM, $\varepsilon\!=\!0.5$, Custom Indoor). \textbf{Bold}: best per column.}
\label{tab:vpred}
\setlength{\tabcolsep}{3pt}
\resizebox{\columnwidth}{!}{%
\begin{tabular}{l | c c c | c c c}
\toprule
& \multicolumn{3}{c|}{$k\!=\!3$ (NFE=5)} & \multicolumn{3}{c}{$k\!=\!10$ (NFE=19)} \\
\cmidrule(lr){2-4} \cmidrule(lr){5-7}
\textbf{Target} & MSE$\downarrow$ & CosSim$\uparrow$ & FDE$\downarrow$ & MSE$\downarrow$ & CosSim$\uparrow$ & FDE$\downarrow$ \\
\midrule
$\epsilon$-pred      & 3.50 & 0.895 & 1.42 & 1.95 & 0.940 & 0.88 \\
$x_0$-pred           & 2.95 & 0.920 & 1.15 & 1.80 & 0.946 & 0.84 \\
\rowcolor{red!8}
$v$-pred (Ours)      & \textbf{1.90} & \textbf{0.945} & \textbf{0.80} & \textbf{1.72} & \textbf{0.949} & \textbf{0.78} \\
\bottomrule
\end{tabular}}
\end{table}

\subsection{Qualitative Results}
\label{sec:qual}

Fig.~\ref{fig:traj} visualizes trajectories across eight challenging scenarios at $k\!=\!3$. Baselines collide within the first few turns (\texttimes{} markers), while RSBM produces smooth, collision-free trajectories closely tracking the ground truth, consistent with the variance reduction of Proposition~\ref{prop:var}.

\begin{figure*}[!t]
\centering
\includegraphics[width=\textwidth,trim=0 2pt 0 2pt,clip]{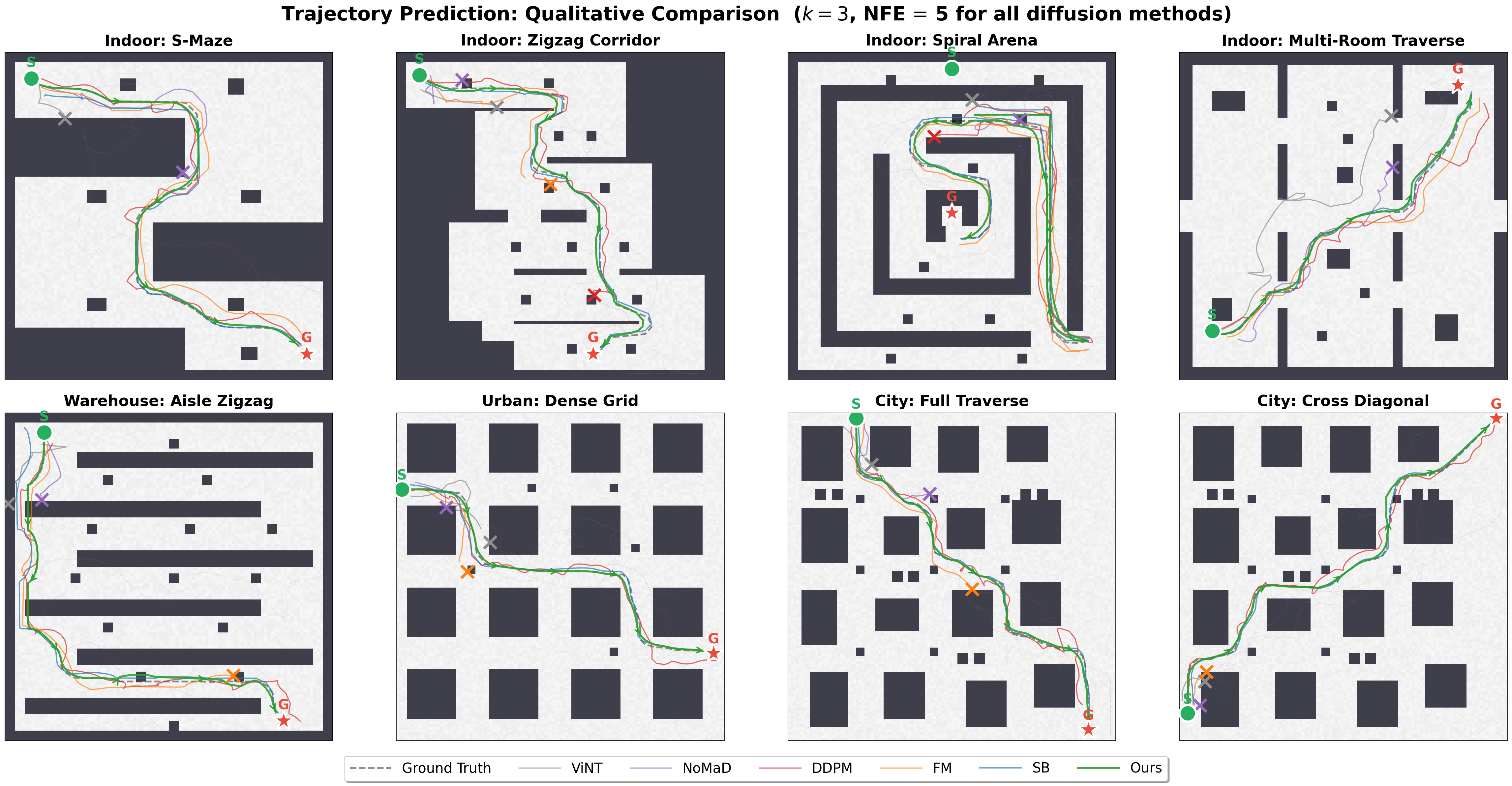}
\caption{\textbf{Qualitative trajectory comparison across eight challenging scenarios} ($2\!\times\!4$ grid, $k\!=\!3$, NFE$\!=\!5$). \emph{Top row}: four indoor/structured environments. \emph{Bottom row}: four large-scale environments. Baselines collide early (\texttimes{}); faint dotted lines show invalid ghost continuations. RSBM (green) remains collision-free and closely tracks the ground truth (dashed gray).}
\label{fig:traj}
\end{figure*}


\subsection{Implementation Details}
\label{app:impl}

\subsubsection{Network Architecture}

The vision encoder uses EfficientNet-B0 with a 4-layer Transformer (context dim $d\!=\!256$). The prior network $g_\psi$ is a 3-layer MLP conditional VAE (hidden 256, latent 32). The velocity network is a Conditional U-Net 1D with channels [64, 128, 256] and FiLM conditioning. Bridge parameters: $\sigma_{\max}\!=\!10.0$, $\sigma_{\min}\!=\!0.002$, $\varepsilon\!=\!0.5$, Heun solver with Karras schedule ($\rho\!=\!7$), default $k\!=\!3$. Training uses AdamW ($\text{lr}\!=\!10^{-4}$, batch 256, 30 epochs). The full hyperparameter table is in the supplementary material.

\subsubsection{Simulation Environments}

\textbf{Custom Indoor.} A Gazebo-based environment with 10 interconnected rooms ($20 \times 15$\,m), narrow doorways, and furniture obstacles.
\textbf{CitySim (Outdoor).} An urban environment with buildings and street-level obstacles over 15--30\,m distances. Both use a TurtleBot3 with monocular RGB ($96 \times 96$). Ground truth paths use A* search with Gaussian smoothing.

\textbf{On generative modeling.} Even in simulation, the conditional action distribution $p(\mathbf{a}_0 \mid \mathcal{O}, I_g)$ is multimodal because visually similar observations admit multiple valid continuations. Deterministic regressors average over modes, producing infeasible trajectories~\cite{florence2022implicit, shafiullah2022behavior}. Table~\ref{tab:main} confirms this: ViNT (deterministic) achieves only 0.720 CosSim and 28\% success rate.

\subsubsection{Statistical Variability}
\label{app:std}

Table~\ref{tab:std} reports mean $\pm$ standard deviation across 3 random seeds on Custom Indoor. RSBM at $k\!=\!3$ shows the lowest variance across all metrics (MSE $1.90 \pm 0.06$, CosSim $0.945 \pm 0.002$, Success $92 \pm 2\%$). In contrast, NaviBridger at $k\!=\!3$ exhibits high instability ($12.00 \pm 0.45$ MSE, $35 \pm 4\%$ success), evidencing unrectified bridge fragility under few-step constraints. Even at its default budget ($k\!=\!10$), NaviBridger's variance remains higher than RSBM at $k\!=\!3$, confirming that $\varepsilon$-rectification stabilizes both the mean and the spread of predictions.

\begin{table}[!t]
\centering
\small
\caption{Mean $\pm$ std over 3 random seeds (Custom Indoor). \textbf{Bold}: best per column.}
\label{tab:std}
\setlength{\tabcolsep}{2pt}
\resizebox{\columnwidth}{!}{%
\begin{tabular}{l c c c c}
\toprule
\textbf{Method} & $k$ & MSE$\downarrow$ & CosSim$\uparrow$ & Suc.\%$\uparrow$ \\
\midrule
ViNT (1-shot) & 1 & $6.50 \pm 0.15$ & $0.720 \pm 0.005$ & $28 \pm 2$ \\
NoMaD & 20 & $3.60 \pm 0.12$ & $0.820 \pm 0.004$ & $32 \pm 3$ \\
DDPM & 50 & $3.80 \pm 0.14$ & $0.820 \pm 0.005$ & $64 \pm 3$ \\
FM & 10 & $2.80 \pm 0.10$ & $0.910 \pm 0.003$ & $82 \pm 2$ \\
\midrule
NaviBridger ($k$=10) & 10 & $1.82 \pm 0.08$ & $0.942 \pm 0.003$ & $88 \pm 2$ \\
\rowcolor{gray!12}
NaviBridger ($k$=3) & 3 & $12.00 \pm 0.45$ & $0.710 \pm 0.015$ & $35 \pm 4$ \\
\midrule
\rowcolor{red!8}
\textbf{RSBM} ($k$=3) & 3 & $\mathbf{1.90 \pm 0.06}$ & $\mathbf{0.945 \pm 0.002}$ & $\mathbf{92 \pm 2}$ \\
RSBM ($k$=10) & 10 & $1.72 \pm 0.05$ & $0.949 \pm 0.002$ & $93 \pm 1$ \\
\bottomrule
\end{tabular}}
\end{table}

\subsubsection{Per-Dataset Multi-Metric Results}
\label{app:perdataset}

\begin{table}[!t]
\centering
\caption{\textbf{Per-dataset comparison} ($k\!=\!3$ unless noted). \textbf{Bold}: best.}
\label{tab:perdataset}
\setlength{\tabcolsep}{1.5pt}
\resizebox{\columnwidth}{!}{%
\begin{tabular}{l c | c c c c c | c}
\toprule
\textbf{Method} & $k$ & HuRoN & Recon & SACSoN & SCAND & GoStan. & \textbf{Avg} \\
\midrule
\multicolumn{8}{c}{\textbf{Action MSE $\downarrow$}} \\
\midrule
DDPM & 3 & 8.50 & 6.20 & 9.80 & 4.60 & 15.30 & 8.88 \\
FM & 3 & 3.20 & 2.60 & 4.10 & 1.80 & 7.50 & 3.84 \\
NaviBridger & 10 & 0.27 & 0.88 & 1.43 & 0.59 & 3.22 & 1.28 \\
NaviBridger & 3 & 2.80 & 3.45 & 5.20 & 2.15 & 8.50 & 4.42 \\
\rowcolor{red!8}
\textbf{RSBM} & \textbf{3} & \textbf{0.25} & \textbf{0.82} & \textbf{1.35} & \textbf{0.48} & \textbf{3.05} & \textbf{1.19} \\
\midrule
\multicolumn{8}{c}{\textbf{CosSim $\uparrow$}} \\
\midrule
DDPM & 3 & 0.350 & 0.420 & 0.280 & 0.480 & 0.210 & 0.348 \\
FM & 3 & 0.720 & 0.740 & 0.670 & 0.790 & 0.560 & 0.696 \\
NaviBridger & 10 & 0.955 & 0.935 & 0.920 & 0.960 & 0.875 & 0.929 \\
NaviBridger & 3 & 0.750 & 0.690 & 0.620 & 0.780 & 0.520 & 0.672 \\
\rowcolor{red!8}
\textbf{RSBM} & \textbf{3} & \textbf{0.958} & \textbf{0.940} & \textbf{0.925} & \textbf{0.965} & \textbf{0.880} & \textbf{0.934} \\
\bottomrule
\end{tabular}}
\end{table}

\subsubsection{ODE Solver Ablation}
\label{app:solver}

Table~\ref{tab:solver} isolates the effect of solver order from bridge rectification. At equal NFE$\!=\!5$, Heun ($k\!=\!3$) outperforms Euler ($k\!=\!5$) by 7.3\% in MSE and 6 percentage points in success rate, confirming the benefit of second-order corrections. Notably, even first-order Euler with RSBM at $k\!=\!5$ (NFE=5, MSE=2.05) already surpasses all non-bridge baselines at their full budgets (FM $k\!=\!10$: MSE=2.80; DDPM $k\!=\!50$: MSE=3.80), demonstrating that the performance gain originates primarily from the rectified bridge geometry.

\begin{table}[!t]
\centering
\small
\caption{Solver ablation (RSBM, $\varepsilon\!=\!0.5$, Custom Indoor). \textbf{Bold}: best per column.}
\label{tab:solver}
\setlength{\tabcolsep}{3pt}
\resizebox{\columnwidth}{!}{%
\begin{tabular}{l c c c c c c}
\toprule
\textbf{Solver} & $k$ & NFE & MSE$\downarrow$ & CosSim$\uparrow$ & FDE$\downarrow$ & Suc.\%$\uparrow$ \\
\midrule
Euler (1st) & 3 & 3 & 2.45 & 0.928 & 1.15 & 82 \\
Euler       & 5 & 5 & 2.05 & 0.938 & 0.95 & 86 \\
Euler       & 10 & 10 & 1.80 & 0.947 & 0.82 & 90 \\
\midrule
Heun (2nd) & 3 & 5 & \textbf{1.90} & \textbf{0.945} & \textbf{0.80} & \textbf{92} \\
Heun       & 5 & 9 & 1.78 & 0.948 & 0.76 & 93 \\
Heun       & 10 & 19 & 1.72 & 0.949 & 0.74 & 93 \\
\bottomrule
\end{tabular}}
\end{table}

\subsubsection{Prior Initialization Ablation}
\label{app:prior_ablation}

A natural question is how much performance is attributable to the learned prior $g_\psi$ versus the $\varepsilon$-rectified bridge itself. Table~\ref{tab:prior_abl} disentangles these contributions on Custom Indoor. Row~1 shows the prior network $g_\psi$ alone (no bridge refinement). Row~2 replaces the learned prior with isotropic Gaussian noise $\mathbf{a}_T \sim \mathcal{N}(\mathbf{0}, \sigma_{\max}^2 \mathbf{I})$ while keeping the RSBM bridge. Row~3 is the full system.

\begin{table}[!t]
\centering
\caption{Prior initialization ablation (Custom Indoor, $k\!=\!3$).}
\label{tab:prior_abl}
\setlength{\tabcolsep}{2pt}
\resizebox{\columnwidth}{!}{%
\begin{tabular}{l c c c c}
\toprule
\textbf{Configuration} & MSE$\downarrow$ & CosSim$\uparrow$ & FDE$\downarrow$ & Suc.\%$\uparrow$ \\
\midrule
Prior $g_\psi$ only (no bridge) & 5.80 & 0.780 & 2.60 & 45 \\
Gauss.\ + RSBM ($\varepsilon\!=\!0.5$) & 4.20 & 0.860 & 1.95 & 62 \\
Gauss.\ + Std.\ SB ($\varepsilon\!=\!1.0$) & 12.00 & 0.710 & 4.20 & 35 \\
\midrule
Prior + Std.\ SB ($\varepsilon\!=\!1.0$) & 5.50 & 0.810 & 2.35 & 52 \\
\rowcolor{red!8}
Prior + RSBM ($\varepsilon\!=\!0.5$) & \textbf{1.90} & \textbf{0.945} & \textbf{0.80} & \textbf{92} \\
\bottomrule
\end{tabular}}
\end{table}

Four key observations emerge: (i)~The prior alone is insufficient, with MSE=5.80 and 45\% success versus the full system at MSE=1.90 and 92\%. (ii)~Standard SB with $\varepsilon\!=\!1$ fails at low $k$ by design, as 3 steps are insufficient for high-variance paths to converge. (iii)~RSBM is effective even from Gaussian noise, achieving MSE=4.20 versus standard SB at MSE=12.00, a $2.9\times$ improvement. (iv)~Prior and rectified bridge are complementary and synergistic.

\subsubsection{Inference Cost Analysis}
\label{app:latency}

The variable inference cost is determined entirely by NFE; vision encoding and prior generation are fixed overhead. The $3.8\times$ NFE reduction (19$\to$5) translates directly into a $3.8\times$ wall-clock speedup. On NVIDIA Jetson Orin, RSBM achieves ${\sim}50$\,ms per cycle (4\,Hz control), while DDPM requires ${\sim}350$\,ms and fails real-time control.

\section{Discussion}
\label{sec:discussion}

\textbf{Why does $\varepsilon$-rectification work?} The success of RSBM can be understood through three complementary lenses. \emph{Geometrically}, reducing $\varepsilon$ concentrates the bridge distribution near the deterministic interpolant $\boldsymbol{\mu}_t$, producing lower-curvature ODE paths that are easier to integrate with finite-step solvers. \emph{Statistically}, Proposition~\ref{prop:var} shows that the conditional velocity variance scales linearly with $\varepsilon$, directly reducing the regression difficulty for the neural network. \emph{Information-theoretically}, the KL cost in Eq.~\ref{eq:kl} quantifies the price of rectification: at $\varepsilon=0.5$ with $D=16$, the cost is only 1.55 nats, a moderate information penalty that yields a $6.3\times$ MSE reduction at $k\!=\!3$.

\textbf{Connection to Rectified Flow and Consistency Models.} Rectified Flow~\cite{liu2023rectified} straightens ODE trajectories through iterative reflow, requiring re-generated data at each stage. Consistency Models~\cite{song2023consistency} distill a pre-trained teacher into a few-step student. RSBM achieves straightening through the bridge geometry itself in a single training stage; our experiments show reflow is unnecessary for $k \geq 3$.

\textbf{The role of the learned prior.} Table~\ref{tab:prior_abl} reveals that prior and $\varepsilon$-rectification contribute multiplicatively. Their combination achieves MSE $1.9$ ($6.3\times$ lower than Gaussian + standard SB), confirming that neither component alone approaches the full system. This decomposition suggests that future improvements to either the prior or the bridge would yield compounding gains.

\textbf{On multimedia relevance.} Visual navigation requires synthesizing structured trajectory outputs conditioned on streaming visual inputs under real-time constraints. This setting shares the central challenge of modern multimedia systems, where iterative generative models must balance output diversity with inference efficiency. The $\varepsilon$-rectification mechanism and the conditional prior paradigm transfer directly to any diffusion-based multimedia pipeline.

\textbf{Limitations.}
Three limitations contextualize the scope of our results. \emph{(1)~Evaluation protocol.} On the five public datasets, evaluation follows the standard open-loop protocol of prior work~\cite{shah2023vint, sridhar2024nomad}: a trajectory is predicted from a single observation and compared against ground-truth waypoints. Only the two simulated environments provide closed-loop success and collision metrics, and both contain static obstacles only. \emph{(2)~Prior dependency.} The learned VAE prior is trained jointly with the bridge network on a per-dataset basis and does not support zero-shot transfer to unseen environments. Deploying RSBM in a new domain requires fine-tuning with target-domain demonstrations; developing a universal, environment-agnostic prior remains open. \emph{(3)~$\varepsilon$ selection.} We fix $\varepsilon=0.5$ throughout all experiments based on a grid search on Custom Indoor. Section~\ref{sec:eps} shows stable performance for $\varepsilon \in [0.3, 0.7]$, but cross-domain robustness of this choice has not been exhaustively verified; an adaptive scheduling strategy would improve applicability.

Future directions include adaptive $\varepsilon$ scheduling, extending RSBM to 3D scene representations~\cite{guo2025igl} and higher-dimensional action spaces, closed-loop validation on physical robot platforms, and applying the rectification mechanism to broader cross-modal multimedia generation tasks.

\section{Conclusion}

We introduced Rectified Schr\"odinger Bridge Matching, a framework that unifies Schr\"odinger Bridges and Conditional Flow Matching through a single entropic regularization parameter $\varepsilon$. We proved that the conditional velocity field is structurally invariant across the entire $\varepsilon$ spectrum (Theorem~\ref{thm:cancel}) and that $\varepsilon$-rectification linearly reduces velocity variance (Proposition~\ref{prop:var}), enabling high-fidelity trajectory generation in as few as 3 ODE steps. Across five public datasets and two simulated environments, RSBM achieves 94.5\% cosine similarity and 92\% success rate at $k\!=\!3$ with $3.8\times$ fewer function evaluations than NaviBridger, without distillation or multi-stage training. Future work includes adaptive $\varepsilon$ scheduling and extension of the rectification mechanism to broader multimedia generation tasks.

\bibliographystyle{IEEEtran}
\bibliography{refs}

\end{document}


\maketitle

This document provides supplementary material for the main paper, including complete theoretical derivations and detailed algorithm pseudocode that complement the condensed presentations in the main text. Code is available at \url{https://github.com/WuyangLuan/RSBM}.

\section{Theoretical Foundations}
\label{app:theory}

\subsection{Generative Modeling as Stochastic Optimal Control}
\label{app:soc}

We cast trajectory generation as a Stochastic Optimal Control (SOC) problem:
\begin{align}
    \min_{u} \quad & \mathbb{E}_{X \sim p^u} \left[ \int_0^1 \frac{1}{2} \|u_t(X_t)\|^2 dt + g(X_1) \right] \\
    \text{s.t.} \quad & dX_t = \left( f_t(X_t) + \sigma_t u_t(X_t) \right) dt + \sigma_t dW_t, \quad X_0 \sim p_{\text{data}}
\end{align}
By the Hamilton-Jacobi-Bellman equation, the optimal control satisfies:
\begin{align}
    u^\star_t(x) &= -\sigma_t \nabla V_t(x) \\
    V_t(x) &= -\log \mathbb{E}_{X \sim p_{\text{base}}} [\exp(-g(X_1)) \mid X_t = x] \\
    p^\star(X_0, X_1) &= p_{\text{base}}(X_0, X_1) \exp(-g(X_1) + V_0(X_0))
\end{align}
Under a memoryless relaxation, the terminal cost becomes $g(x) = \log \frac{p_{\text{base}}^1(x)}{p_{\text{prior}}(x)}$.

\subsection{Duality with Schr\"odinger Bridges}
\label{app:sb}

Under SB optimality, dynamics are governed by coupled potentials $\phi_t$ and $\hat{\phi}_t$:
\begin{align}
    u^\star_t(x) &= \sigma_t \nabla_x \log \phi_t(x), \quad v^\star_t(x) = \sigma_t \nabla_x \log \hat{\phi}_t(x) \\
    \phi_t(x) &= \int p_{\text{base}}^{1|t}(y \mid x)\, \phi_1(y)\, dy, \quad \phi_0 \hat{\phi}_0 = p_{\text{prior}} \\
    \hat{\phi}_t(x) &= \int p_{\text{base}}^{t|0}(x \mid y)\, \hat{\phi}_0(y)\, dy, \quad \phi_1 \hat{\phi}_1 = p_{\text{data}}
\end{align}
Unified via Hopf-Cole transforms: $\phi_t(x) = \exp(-V_t(x))$, $\hat{\phi}_t(x) = \exp(V_t(x))\, p^\star_t(x)$.

\subsection{Connection to the $\varepsilon$-Rectified Kernel}
\label{app:connection}

The standard SB bridge kernel with VE dynamics is $q(X_t \mid X_0, X_1) = \mathcal{N}((1{-}s_t) X_0 + s_t X_1,\; t^2(1{-}s_t)\mathbf{I})$ with $s_t = t^2/\sigma_{\max}^2$. Our $\varepsilon$-rectification sets $\sigma_{\varepsilon,t}^2 = \varepsilon \cdot t^2 (1 - s_t)$, modulating entropic regularization strength. The SB problem with cost $\gamma \operatorname{KL}(p \| p_{\text{ref}})$ yields $\varepsilon = \gamma/\gamma_0$, interpolating between full stochastic transport ($\varepsilon=1$) and deterministic OT ($\varepsilon \to 0$).

\textbf{KL Divergence.} The cost between $\varepsilon$-rectified and standard bridges is:
\begin{align}
    \operatorname{KL}(q_\varepsilon \| q_1) = \frac{D}{2}(\varepsilon - 1 - \log \varepsilon)
\end{align}
For $\varepsilon=0.5$, $D=16$: $\operatorname{KL} = 1.55$ nats.

\subsection{Boundary Condition Verification}
At $t=0$: $s_0=0$, $\sigma_{\varepsilon,0}=0 \Rightarrow \mathbf{a}_0 = \mathbf{a}_0$ \checkmark. At $t=\sigma_{\max}$: $s_{\sigma_{\max}}=1$, $\sigma_{\varepsilon,\sigma_{\max}}=0 \Rightarrow \mathbf{a}_{\sigma_{\max}} = \mathbf{a}_T$ \checkmark.

\subsection{Probability Flow ODE Derivation}
The PF-ODE replaces stochastic dynamics with deterministic flow. Substituting the conditional score yields the velocity field $\mathbf{v}_t^* = d\boldsymbol{\mu}_t/dt + (d\log\sigma_{\varepsilon,t}/dt)(\mathbf{a}_t - \boldsymbol{\mu}_t)$, where the log-derivative is $\varepsilon$-independent.

\subsection{$v$-Prediction SNR Analysis}
$\varepsilon$-prediction amplifies errors near $t \approx 0$; $x_0$-prediction has low SNR near $t \approx \sigma_{\max}$. $v$-prediction balances both:
\begin{align}
    \text{SNR}_v(t) = \frac{\|d\boldsymbol{\mu}_t/dt\|^2}{\varepsilon(1-2s_t)^2/(1-s_t)}
\end{align}

\section{Detailed Derivations and Proofs}
\label{app:proofs}

\subsection{Full Velocity Field Derivation}
\label{app:vel_full}

\textbf{Step 1: Mean derivative.}
$d\boldsymbol{\mu}_t/dt = (2t/\sigma_{\max}^2)(\mathbf{a}_T - \mathbf{a}_0)$

\textbf{Step 2: Std deviation derivative.}
$\sigma_{\varepsilon,t} = \sqrt{\varepsilon}\, t\sqrt{1-s_t}$, \quad $d\sigma_{\varepsilon,t}/dt = \sqrt{\varepsilon}(1-2s_t)/\sqrt{1-s_t}$

\textbf{Step 3: $\varepsilon$ cancellation.}
$d\log\sigma_{\varepsilon,t}/dt = (1-2s_t)/[t(1-s_t)]$ — $\sqrt{\varepsilon}$ cancels exactly.

\textbf{Step 4: Final velocity.}
$\mathbf{v}_t^* = \frac{2t}{\sigma_{\max}^2}(\mathbf{a}_T - \mathbf{a}_0) + \frac{1-2s_t}{t(1-s_t)}(\mathbf{a}_t - \boldsymbol{\mu}_t)$

\subsection{Variance Landscape Discussion}
$V(s_t) = \varepsilon(1-2s_t)^2/(1-s_t)$ vanishes at $s_t=1/2$, is bounded by $\varepsilon$ near $s_t \to 0$, and diverges near $s_t \to 1$ (canceled by $\Delta t \to 0$). Reducing $\varepsilon$ to 0.5 halves variance uniformly.

\subsection{Sampling Error Bound}
$W_2(\hat{p}_0, p_0) \leq C_1 T \delta + C_2 L_\theta T^3/k^2$. Lower $\varepsilon$ reduces $\delta$ and discretization error.

\section{Training and Inference Algorithms}
\label{app:algorithms}

\begin{figure}[H]
\centering
\fbox{\parbox{\columnwidth}{
\textbf{Algorithm 1}: RSBM Training \label{alg:train}
\hrule\textbf{Input:} Dataset $\mathcal{D} = \{(\mathcal{O}_i, I_g^i, \mathbf{a}_0^i)\}_{i=1}^N$, $\sigma_{\max}$, $\varepsilon$, learning rate $\eta$ \\
\textbf{Output:} Trained networks $f_\phi$, $g_\psi$, $\mathbf{v}_\theta$ \\[2mm]
\textbf{repeat} \\
\quad 1. Sample minibatch $\{(\mathcal{O}, I_g, \mathbf{a}_0)\} \sim \mathcal{D}$ \\
\quad 2. Encode context: $\mathbf{c} = f_\phi(\mathcal{O}, I_g)$ \\
\quad 3. Compute prior: $\mathbf{a}_T = g_\psi(\mathbf{c}, \mathbf{z})$, $\;\mathbf{z} \sim q_\psi(\mathbf{z} \mid \mathbf{c}, \mathbf{a}_0)$ \\
\quad 4. Sample time: $t \sim \mathcal{U}(\sigma_{\min},\; \sigma_{\max})$; $\quad s_t = t^2 / \sigma_{\max}^2$ \\
\quad 5. Compute interpolant: $\boldsymbol{\mu}_t = (1 - s_t)\,\mathbf{a}_0 + s_t\,\mathbf{a}_T$ \\
\quad 6. Sample noise: $\boldsymbol{\epsilon} \sim \mathcal{N}(\mathbf{0}, \mathbf{I})$ \\
\quad 7. Construct bridge sample: $\mathbf{a}_t = \boldsymbol{\mu}_t + \sqrt{\varepsilon}\, t\sqrt{1-s_t}\;\boldsymbol{\epsilon}$ \\
\quad 8. Compute target velocity: \\
\quad\quad $\mathbf{v}_t^* = \frac{2t}{\sigma_{\max}^2}(\mathbf{a}_T - \mathbf{a}_0) + \frac{1-2s_t}{t(1-s_t)}(\mathbf{a}_t - \boldsymbol{\mu}_t)$ \\
\quad 9. Update: $(\phi, \psi, \theta) \leftarrow (\phi, \psi, \theta) - \eta\,\nabla\!\left[\|\mathbf{v}_\theta(\mathbf{a}_t, t, \mathbf{c}) - \mathbf{v}_t^*\|^2 + \mathcal{L}_{\text{prior}}\right]$ \\
\textbf{until} converged
}}
\end{figure}

\begin{figure}[H]
\centering
\fbox{\parbox{\columnwidth}{
\textbf{Algorithm 2}: RSBM Inference (Sampling) \label{alg:infer}
\hrule\textbf{Input:} Observation $\mathcal{O}$, goal $I_g$, networks $(f_\phi, g_\psi, \mathbf{v}_\theta)$, steps $k$, schedule $\{t_i\}_{i=0}^k$ \\
\textbf{Output:} Predicted trajectory $\hat{\mathbf{a}}_0$ \\[2mm]
1. Encode context: $\mathbf{c} = f_\phi(\mathcal{O}, I_g)$ \\
2. Generate prior: $\mathbf{a}_T = g_\psi(\mathbf{c}, \mathbf{z}_{\text{prior}})$, $\;\mathbf{z}_{\text{prior}} \sim \mathcal{N}(\mathbf{0}, \mathbf{I})$ \\
3. Initialize: $\mathbf{a}_{t_0} \leftarrow \mathbf{a}_T$, where $t_0 = \sigma_{\max}$ \\
4. \textbf{for} $i = 0, 1, \ldots, k-1$ \textbf{do} \\
\quad\quad $\Delta t = t_i - t_{i+1}$ \\
\quad\quad $\mathbf{d}_1 = \mathbf{v}_\theta(\mathbf{a}_{t_i},\; t_i,\; \mathbf{c})$ \\
\quad\quad $\tilde{\mathbf{a}} = \mathbf{a}_{t_i} - \Delta t \cdot \mathbf{d}_1$ \\
\quad\quad $\mathbf{d}_2 = \mathbf{v}_\theta(\tilde{\mathbf{a}},\; t_{i+1},\; \mathbf{c})$ \\
\quad\quad $\mathbf{a}_{t_{i+1}} = \mathbf{a}_{t_i} - \Delta t \cdot \tfrac{1}{2}(\mathbf{d}_1 + \mathbf{d}_2)$ \\
5. \textbf{return} $\hat{\mathbf{a}}_0 = \mathbf{a}_{t_k}$ \quad (NFE $= 2k-1$)
}}
\end{figure}

\section{Extended Experimental Results}
\label{app:extended}

All experimental tables (statistical variability, ODE solver ablation, prior initialization ablation, and inference cost analysis) are presented in the main paper.

\subsection{Architecture and Hyperparameters}

\begin{table}[H]
\centering
\caption{Architecture and training hyperparameters.}
\label{tab:arch}
\resizebox{\columnwidth}{!}{%
\begin{tabular}{l l}
\toprule
\textbf{Component} & \textbf{Specification} \\
\midrule
Vision encoder & EfficientNet-B0 + 4-layer Transformer \\
Context dimension $d$ & 256 \\
Prior encoder/decoder $q_\psi / g_\psi$ & 3-layer MLP, hidden dim 256 \\
Latent dimension $|\mathbf{z}|$ & 32 \\
Velocity network & Conditional U-Net 1D, channels [64, 128, 256] \\
Conditioning & FiLM (Feature-wise Linear Modulation) \\
Trajectory horizon $H$ & 8 waypoints \\
\midrule
$\sigma_{\max}$ / $\sigma_{\min}$ / $\varepsilon$ & 10.0 / 0.002 / 0.5 \\
ODE solver / Schedule & Heun (2nd order) / Karras ($\rho = 7.0$) \\
Sampling steps $k$ & 3 (default) \\
\midrule
Optimizer / LR / Batch / Epochs & AdamW / $10^{-4}$ / 256 / 30 \\
\bottomrule
\end{tabular}}
\end{table}